%
%

\pdfoutput=1

\documentclass[11pt,a4paper]{article}

\usepackage{graphicx}  
\usepackage{times}
\usepackage{latexsym}

\usepackage{url}
\usepackage{amsmath}
\usepackage{amsthm}
\usepackage{amssymb}
\usepackage{bbm}
\usepackage{mathtools}
\usepackage{paralist}

\usepackage{pgf}
\usepackage{tikz}
\usetikzlibrary{arrows,automata}



\theoremstyle{definition}
\newtheorem{definition}{Definition}
\theoremstyle{plain}
\newtheorem{theorem}{Theorem}
\newtheorem{lemma}{Lemma}
\newtheorem{corollary}{Corollary}

\usepackage{xpatch}
\makeatletter
\AtBeginDocument{\xpatchcmd{\@thm}{\thm@headpunct{.}}{\thm@headpunct{}}{}{}}
\makeatother

\newcommand{\bv}[1]{\mathbf{#1}}
\newcommand{\bm}[1]{\mathbf{#1}}

\DeclarePairedDelimiter\abs{\lvert}{\rvert}%
\DeclarePairedDelimiter\norm{\lVert}{\rVert}%

\makeatletter
\let\oldabs\abs
\def\abs{\@ifstar{\oldabs}{\oldabs*}}
\let\oldnorm\norm
\def\norm{\@ifstar{\oldnorm}{\oldnorm*}}
\makeatother



\newcommand{\anbn}{a^nb^n}

\newcommand{\contanbn}{a^nb^n\Sigma^*}
\newcommand{\MLPT}[1]{D_{#1}}
\newcommand{\decfun}[1]{\bv d_{#1}}

\newcommand{\concat}{{\cdot}}

\newcommand{\zcheck}{\vec{\mathbbm{1}}_{=0}}

\newcommand{\num}[2]{\#_{#1}(#2)}

\newcommand{\computes}[1]{\rightarrow_{#1}}

\newcommand{\citeinp}[1]{\citeauthor{#1}, \citeyear{#1}}

\usepackage{xspace}
\newcommand{\model}{encoder\xspace}
\newcommand{\modelpl}{encoders\xspace}
\newcommand{\Modelpl}{Encoders\xspace}

\newcommand{\mdet}{an\xspace}
\newcommand{\mDet}{An\xspace}

\newcommand{\capacity}{state expressiveness\xspace}

\DeclareMathOperator{\relu}{\mathrm{ReLU}}

\newcommand{\fixfun}{\mathrm{fix}}

\newcommand{\rat}{\mathcal{R}}

\newcommand{\clean}[1]{\textrm{\normalfont #1}}

\usepackage{svg}

\makeatletter
\define@key{svg.sty}{scale}{}
\makeatother

\usepackage[hyperref]{acl2020}
\usepackage{times}
\usepackage{latexsym}

\usepackage{microtype}

\aclfinalcopy 




\title{A Formal Hierarchy of RNN Architectures}

\author{
    William Merrill\footnotemark[1]\: \hspace{2.0em}
    Gail Weiss\footnotemark[2]\: \hspace{2.0em}
    Yoav Goldberg\footnotemark[1]\: \footnotemark[3]\: \\
    \bf{Roy Schwartz}\footnotemark[1]\: \footnotemark[4]\: \hspace{2.0em}
    \bf{Noah A.~Smith}\footnotemark[1]\: \footnotemark[4]\: \hspace{2.0em}
    \bf{Eran Yahav}\footnotemark[2]\:\\
    \footnotemark[1]\: Allen Institute for AI  \hspace{.2em}
    \footnotemark[2]\: Technion \hspace{.2em}
    \footnotemark[3]\: Bar Ilan University \hspace{.2em}
    \footnotemark[4]\: University of Washington \\
    \texttt{\{willm,yoavg,roys,noah\}@allenai.org}\\
    \texttt{\{sgailw,yahave\}@cs.technion.ac.il}
 }

\date{\today}

\begin{document}
\maketitle



\begin{abstract}
We develop a formal hierarchy of the expressive capacity of RNN architectures. The hierarchy is based on two formal properties: space complexity, which measures the RNN's memory, and rational recurrence, defined as whether the recurrent update can be described by a weighted finite-state machine. We place several RNN variants within this hierarchy. For example, we prove the LSTM is not rational, which formally separates it from the related QRNN \citep{bradbury2016qrnn}.
We also show how these models' expressive capacity is expanded 
by stacking multiple layers or composing them with different pooling functions.
Our results build on the theory of ``saturated" RNNs \citep{merrill-2019-sequential}. While formally extending these findings to unsaturated RNNs is left to future work, we hypothesize that the practical learnable capacity of unsaturated RNNs obeys a similar hierarchy. 
Experimental findings from training unsaturated networks on formal languages support this conjecture.
\textbf{We report updated experiments in \autoref{sec:erratum}.}
\end{abstract}
\section{Introduction} \label{Sec:Intro}

\begin{figure}
    \centering
    \def\svgwidth{208pt}
    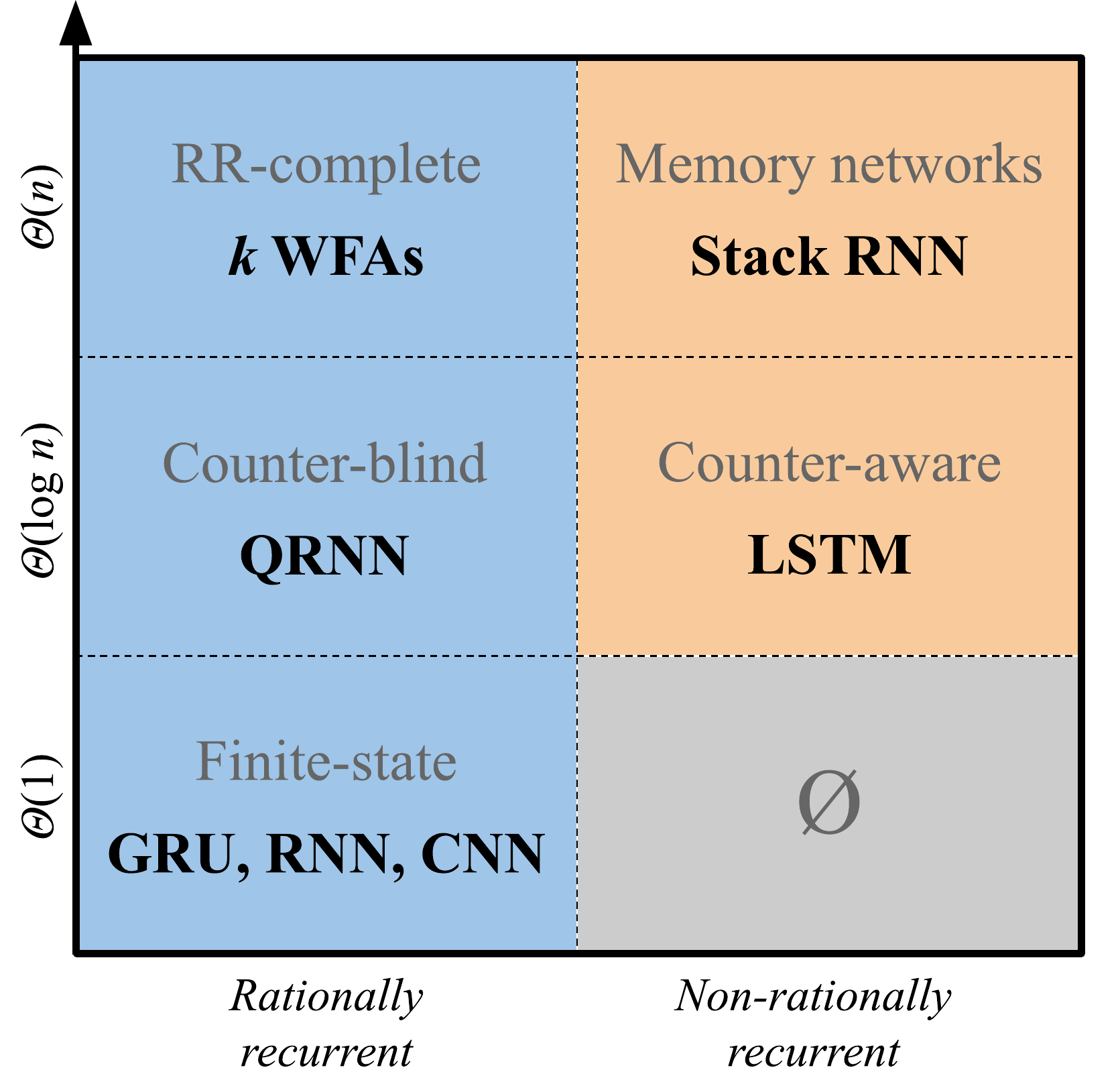
    \caption{Hierarchy of state expressiveness for saturated RNNs and related models. 
    The $y$ axis represents 
    increasing 
    space complexity. 
    $\emptyset$ means provably empty.
    Models are in bold
    with qualitative descriptions in gray. 
    \label{fig:hierarchy1}}
\end{figure}

While neural networks are central to the performance of today's strongest NLP systems, theoretical understanding of the formal properties of different kinds of networks is still limited.  It is established, for example, that the \citet{elman1990finding} RNN is Turing-complete, given infinite precision and computation time \cite{siegelmann1992turing,siegelmann1994analog,chen2017recurrent}. But tightening these unrealistic assumptions has serious implications for expressive power \citep{weiss2018}, leaving a significant gap between classical theory and practice, which theorems in this paper attempt to address.

Recently, \citet{peng2018rational} introduced \textbf{rational RNNs}, a subclass of RNNs whose internal state can be computed by independent weighted finite automata (WFAs). Intuitively, such models have a computationally simpler recurrent update than conventional models like long short-term memory networks (LSTMs; \citealp{hochreiter1997long}).
Empirically, rational RNNs like the quasi-recurrent neural network (QRNN; \citealp{bradbury2016qrnn}) and unigram rational RNN \citep{dodge-etal-2019-rnn} perform comparably to the LSTM, with a smaller computational budget. Still, the underlying simplicity of rational models raises the question of whether their expressive power is fundamentally limited compared to other RNNs.

In a separate line of work, \citet{merrill-2019-sequential} introduced the \textbf{saturated RNN}\footnote{Originally referred to as the \emph{asymptotic RNN}.} as a formal model for analyzing the capacity of RNNs. A saturated RNN is a simplified network where all activation functions have been replaced by step functions. The saturated network may be seen intuitively as a ``stable'' version of its original RNN, in which the internal activations act discretely. A growing body of work---including this paper---finds that the saturated theory predicts differences in practical learnable capacity for various RNN architectures \citep{weiss2018, merrill-2019-sequential, suzgun-etal-2019-lstm}.

We compare the expressive power of rational and non-rational RNNs, distinguishing between \textit{state expressiveness} (what kind and amount of information the RNN states can capture) and \textit{language expressiveness} (what languages can be recognized when the state is passed to a classifier). To do this, we build on the theory of saturated RNNs.

\paragraph{State expressiveness} We introduce a unified hierarchy (\autoref{fig:hierarchy1}) of the functions expressible by the states of rational and non-rational RNN encoders. The hierarchy is defined by two formal properties: space complexity, which is a measure of network memory,\footnote{Space complexity measures the number of different configurations an RNN can reach as a function of input length. Formal definition deferred until \autoref{sec:definitions}.} and rational recurrence, whether the internal structure of the RNN can be described by WFAs. 
The hierarchy reveals concrete differences between LSTMs and QRNNs, and further separates both from a class containing convolutional neural networks (CNNs, \citealp{lecun1998convolutional, kim2014convolutional}), Elman RNNs, and gated recurrent units (GRU; \citealp{cho2014learning}). 

We provide the first formal proof that LSTMs can encode functions that rational recurrences cannot.
On the other hand, we show that the saturated Elman RNN and GRU are rational recurrences with constant space complexity, whereas the QRNN has unbounded space complexity. We also show that an unrestricted WFA has rich expressive power beyond any saturated RNN we consider---including the LSTM. This difference potentially opens the door to more expressive RNNs incorporating the computational efficiency of rational recurrences.

\paragraph{Language expressiveness}
When applied to classification tasks like language recognition, RNNs are typically 
combined with a ``decoder'': additional layer(s) that map their hidden states to a prediction.
Thus, despite differences in state expressiveness, rational RNNs might be able to achieve comparable empirical performance to non-rational RNNs 
on NLP tasks. 
In this work, we consider the setup in which the decoders only view the final hidden state of the RNN.\footnote{This is common, but not the only possibility. For example, an attention decoder observes the full sequence of states.}
We demonstrate that
a sufficiently strong decoder
can overcome some of the differences in state expressiveness between different models. 
For example, an LSTM can recognize $a^nb^n$ with a single decoding layer, whereas a QRNN provably cannot until the decoder has two layers. 
However, we also construct a language that an LSTM can recognize without a decoder, but a QRNN cannot recognize with any decoder.
Thus, no decoder can fully compensate for the weakness of the QRNN compared to the LSTM. 


\paragraph{Experiments} Finally, we conduct experiments on formal languages,
justifying that our theorems
correctly predict which languages unsaturated recognizers trained by gradient descent can learn.
Thus, we view our hierarchy as a useful formal tool for understanding the relative capabilities of different RNN architectures.

\paragraph{Roadmap} We present the formal devices for our analysis of RNNs in \autoref{sec:definitions}. In \autoref{Sec:Hierarchy} we develop our hierarchy of state expressiveness for single-layer RNNs. In \autoref{sec:languages}, we shift to study RNNs as language recognizers. Finally, in \autoref{Sec:Experiments}, we provide empirical results evaluating the
relevance of our predictions
for unsaturated RNNs.

\section{Building Blocks} \label{sec:definitions}

In this work, we analyze RNNs using formal models from automata theory---in particular, WFAs and counter automata. In this section, we first define the basic notion of an encoder studied in this paper, and then introduce more specialized formal concepts: WFAs, counter machines (CMs), space complexity, and, finally, various RNN architectures.

\subsection{Encoders}

We view both RNNs and automata as \textit{\modelpl}: machines that can be parameterized to compute a set of functions $f : \Sigma^* \rightarrow \mathbb{Q}^k$, where $\Sigma$ is an input alphabet and $\mathbb{Q}$ is the set of rational reals. Given an encoder $M$ and parameters $\theta$, we use $M_\theta$ to represent the specific function that the parameterized encoder computes. For each \model, we refer to the set of functions that it can compute as its \textit{\capacity}. For example, a deterministic finite state acceptor (DFA) is \mdet \model whose parameters are its transition graph. Its \capacity is the indicator functions for the regular languages. 



\subsection{WFAs}

Formally, a WFA is a non-deterministic finite automaton where each starting state, transition, and final state is weighted. Let $Q$ denote the set of states, $\Sigma$ the alphabet, and $\mathbb{Q}$ the rational reals.\footnote{WFAs are often defined over a generic semiring; we consider only the special case when it is the field of rational reals.}  This weighting is specified by three functions:

\begin{compactenum}
    \item Initial state weights $\lambda : Q \rightarrow \mathbb{Q}$
    \item Transition weights $\tau : Q \times \Sigma \times Q \rightarrow \mathbb{Q}$
    \item Final state weights $\rho : Q \rightarrow \mathbb{Q}$    
\end{compactenum}

\noindent The weights are used to encode any string $x \in \Sigma^*$:

\begin{definition}[Path score]
Let $\pi$ be a path of the form $q_0 \computes{x_1} q_1 \computes{x_2} \cdots \computes{x_t} q_t$ through WFA $A$. The score of $\pi$ is given by
\begin{equation*}
    A[\pi] = \lambda(q_0) \left( \textstyle\prod_{i=1}^t \tau(q_{i-1}, x_i, q_i) \right) \rho(q_t) .
\end{equation*}
\end{definition}

\noindent By $\Pi(x)$, denote the set of paths producing $x$.

\begin{definition}[String encoding] \label{def:string-score}
The encoding computed by a WFA $A$ on string $x$ is 
\begin{equation*}
    A[x] = \textstyle \sum_{\pi \in \Pi(x)} A[\pi] .
\end{equation*}
\end{definition}

\paragraph{Hankel matrix} Given a function $f : \Sigma^* \rightarrow \mathbb{Q}$ and two enumerations $\alpha, \omega$ of the strings in $\Sigma^*$, we define the Hankel matrix of $f$ as the infinite matrix
\begin{equation}
    [H_f]_{ij} = f(\alpha_i\concat \omega_j) .
\end{equation}
where $\concat$ denotes concatenation.
It is sometimes convenient to treat $H_f$ as though it is directly indexed by $\Sigma^*$, e.g. $[H_f]_{\alpha_i,\omega_j}=f(\alpha_i\concat\omega_j)$, or refer to a sub-block of a Hankel matrix, row- and column- indexed by prefixes and suffixes $P,S\subseteq\Sigma^*$.
The following result relates the Hankel matrix to WFAs:

\begin{theorem}[\citeinp{carlyle1971}; \citeinp{fliess1974matrices}] \label{thm:rank}
    For any $f: \Sigma^* \rightarrow \mathbb{Q}$, there exists a WFA that computes $f$ if and only if $H_f$ has finite rank.
\end{theorem}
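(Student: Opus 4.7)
The plan is to prove the two directions of the biconditional separately, using the linear-algebraic correspondence between weighted automata and their forward/backward vectors.

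For the ($\Rightarrow$) direction, I would start by writing any WFA $A$ with $n$ states in matrix form: let $\lambda, \rho \in \mathbb{Q}^n$ be the initial- and final-weight vectors, and for each symbol $\sigma \in \Sigma$ let $T_\sigma \in \mathbb{Q}^{n \times n}$ be the transition matrix whose $(i,j)$ entry is $\tau(q_i, \sigma, q_j)$. A routine induction on string length shows that $A[x_1 x_2 \cdots x_t] = \lambda^\top T_{x_1} T_{x_2} \cdots T_{x_t}\, \rho$. Then, for any prefix $\alpha$ and suffix $\omega$, I would define the forward vector $u_\alpha^\top = \lambda^\top T_\alpha$ and the backward vector $v_\omega = T_\omega\, \rho$, so that $[H_f]_{\alpha,\omega} = u_\alpha^\top v_\omega$. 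This factors $H_f$ as the product of a matrix with $n$ columns (rows $u_\alpha^\top$) and a matrix with $n$ rows (columns $v_\omega$), so $\mathrm{rank}(H_f) \le n < \infty$.

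For the harder ($\Leftarrow$) direction, suppose $H_f$ has finite rank $n$. I would work with the row space of $H_f$, viewing each row as the function $\omega \mapsto f(\alpha\concat\omega)$ indexed by the prefix $\alpha$; call this row $r_\alpha$. The key observation is that the shift $r_\alpha \mapsto r_{\alpha\sigma}$ is well-defined and linear on the row space: since $r_{\alpha\sigma}(\omega) = f(\alpha\sigma\omega) = r_\alpha(\sigma\omega)$, any linear relation $\sum_i c_i r_{\alpha_i} = 0$ is preserved after shifting by $\sigma$. Pick prefixes $\alpha_1, \ldots, \alpha_n$ whose rows form a basis of the row space. In that basis, each shift operator is represented by an $n \times n$ rational matrix $T_\sigma$; let $\lambda \in \mathbb{Q}^n$ be the coordinates of $r_\epsilon$ in this basis, and set $\rho_i = f(\alpha_i) = r_{\alpha_i}(\epsilon)$. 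I would then check by induction on $|x|$ that $\lambda^\top T_{x_1} \cdots T_{x_t}$ is the coordinate vector of $r_x$, so $\lambda^\top T_{x_1} \cdots T_{x_t}\, \rho = r_x(\epsilon) = f(x)$, yielding a WFA with $n$ states that computes $f$.

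The main obstacle, as usual in this classical argument, is the $(\Leftarrow)$ direction: specifically, verifying that the shift operators descend to well-defined linear maps on the finite-dimensional row space, and that a finite basis of prefix rows actually exists and can be chosen constructively. Once those points are settled, the construction of $(\lambda, \{T_\sigma\}, \rho)$ and the inductive verification that the resulting WFA computes $f$ are essentially formal. The rationality of all entries follows because $H_f$ has rational entries, the basis can be chosen from among the prefix rows, and Gaussian elimination over $\mathbb{Q}$ keeps every coefficient in $\mathbb{Q}$.
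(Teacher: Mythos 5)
The paper does not prove this theorem: it is quoted as a classical result of Carlyle--Paz and Fliess, so there is no internal proof to compare against. Your argument is the standard proof of that classical theorem and is correct --- the rank-$n$ factorization $[H_f]_{\alpha,\omega}=u_\alpha^\top v_\omega$ for the forward direction, and for the converse the observation that each row $r_{\alpha\sigma}$ is again a row of $H_f$ (hence lies in the $n$-dimensional row space) so the shift operators descend to well-defined linear maps $T_\sigma$ on a basis of prefix rows, with $\lambda$ the coordinates of $r_\epsilon$ and $\rho_i=f(\alpha_i)$; the rationality of all weights follows as you say since the basis and coordinates are obtained by linear algebra over $\mathbb{Q}$.
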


\paragraph{Rational series {\normalfont \citep{sakarovitch2009rational}}} For all $k \in \mathbb{N}$, $ \bv f : \Sigma^* \rightarrow \mathbb{Q}^k$ is a \textit{rational series} if there exist WFAs $A_1, \cdots, A_k$ such that, for all $x \in \Sigma^*$ and $1 \leq i \leq k$, $A_i[x] = f_i(x)$.

\subsection{Counter Machines} \label{subsec:cms}

We now turn to introducing a different type of \model: the real-time counter machine (CM; \citealp{anonymous2020counters, fischer1966turing, fischer1968counter}). CMs are deterministic finite-state machines augmented with finitely many integer counters. While processing a string, 
the machine updates these counters,
and may use them to inform its behavior.

We view counter machines as \modelpl mapping $\Sigma^* \rightarrow \mathbb{Z}^k$. For $m \in \mathbb{N},\circ\in\{+,-,\times\}$, let $\circ m$ denote the function $f(n) = n \circ m$. 

\begin{definition}[General CM; \citeauthor{anonymous2020counters}, \citeyear{anonymous2020counters}]
A $k$-counter CM is a tuple $\langle \Sigma, Q, q_0, u, \delta \rangle$ with
\begin{compactenum}
    \item A finite alphabet $\Sigma$
    \item A finite set of states $Q$, with initial state $q_0$
    \item A counter update function 
    $$
        u : \Sigma \times Q \times \{0, 1\}^k \rightarrow 
        \{ \times 0, -1, +0, +1 \}^k
    $$
    \item A state transition function
    $$
        \delta : \Sigma \times Q \times \{0, 1\}^k \rightarrow Q
    $$
\end{compactenum}
\end{definition}

A CM processes input tokens $\{x_t\}_{t=1}^n$ sequentially. Denoting $\langle q_t , \bv c_t\rangle \in Q \times \mathbb{Z}^k$ a CM's configuration at time $t$, define its next configuration:
\begin{align}
q_{t+1} &= \delta \left( x_t, q_t, \zcheck \left(\bv c_t \right) \right)  \label{eq:delta} \\
\bv c_{t+1} &= u \left( x_t , q_t, \zcheck \left(\bv c_t \right) \right) (\bv c_t) , \label{eq:u}
\end{align}
where $\zcheck$ is a broadcasted ``zero-check" operation, i.e., $\zcheck(\bv v)_i \triangleq \mathbbm{1}_{=0}(v_i)$.
In \eqref{eq:delta} and \eqref{eq:u}, note that the machine only views the zeroness of each counter, and not its actual value. A general CM's encoding of a string $x$ is the value of its counter vector $\bv c_t$ after processing all of $x$.

\paragraph{Restricted CMs}
\begin{compactenum}
    \item A CM is \textit{$\Sigma$-restricted} iff $u$ and $\delta$ depend only on the current input $\sigma \in \Sigma$.
    \item A CM is \textit{$(\Sigma \times Q)$-restricted} iff $u$ and $\delta$ depend only on the current input $\sigma \in \Sigma$ and the current state $q \in Q$.
    \item A CM is \textit{$\Sigma^w$-restricted} iff it is $(\Sigma \times Q)$-restricted, and the states $Q$ are windows over the last $w$ input tokens, e.g., $Q=\Sigma^{\leq w}$.\footnote{The states $q\in\Sigma^{<w}$ represent the beginning of the sequence, before $w$ input tokens have been seen.}    
\end{compactenum}
These restrictions prevent the machine from being ``counter-aware": $u$ and $\delta$ cannot condition on the counters' values. As we will see, restricted CMs have natural parallels in the realm of rational RNNs. In \autoref{sec:rational-counting}, we consider the relationship between counter awareness and rational recurrence.

\subsection{Space Complexity}

As in \citet{merrill-2019-sequential}, we also analyze \modelpl in terms of state space complexity, measured in bits.

\begin{definition}[Bit complexity]
\mDet \model $M:\Sigma^* \rightarrow \mathbb{Q}^k$ has $T(n)$ space iff
\begin{equation*}
    \max_\theta \; \abs{\{s_{M_\theta}(x) \mid x \in\Sigma^{\leq n}\}}=2^{T(n)}, 
\end{equation*}
where $s_{M_\theta}(x)$ is a minimal representation\footnote{I.e., the minimal state representation needed to compute $M_\theta$ correctly. This distinction is important for architectures like attention, for which some implementations may retain unusable information such as input embedding order.}
of $M$'s internal configuration immediately after $x$.
\end{definition}

We consider three asymptotic space complexity classes: $\Theta(1)$, $\Theta(\log n)$, and $\Theta(n)$, corresponding to \modelpl that can reach a constant, polynomial, and exponential (in sequence length) number of configurations respectively. 
Intuitively, \modelpl that can dynamically count but cannot use more complex memory like stacks--such as all CMs--are in $\Theta(\log n)$ space. 
\Modelpl that can uniquely encode every input sequence are in $\Theta(n)$ space.

\subsection{Saturated Networks}
A saturated neural network is a discrete approximation of neural network considered by \citet{merrill-2019-sequential}, who calls it an ``asymptotic network.'' Given a parameterized neural encoder $M_\theta(x)$, we construct the saturated network $\clean{s-}M_\theta(x)$ by taking
\begin{equation}
    \clean{s-}M_\theta(x) = \lim_{N \rightarrow \infty} M_{N\theta}(x)
\end{equation}
\noindent where $N\theta$ denotes the parameters $\theta$ multiplied by a scalar $N$.
This transforms
each ``squashing'' function (sigmoid, tanh, etc.) to its extreme values (0, $\pm$1).
In line with prior work \citep{weiss2018, merrill-2019-sequential, suzgun2019memory}, we consider saturated networks a reasonable approximation for analyzing practical expressive power. For clarity, we denote the saturated approximation of an architecture by prepending it with s, e.g., s-LSTM.

\subsection{RNNs}

A recurrent neural network (RNN) is a parameterized update function $g_\theta:\mathbb{Q}^k \times \mathbb{Q}^{d_x} \rightarrow \mathbb{Q}^{k}$, where $\theta$ are the rational-valued parameters of the RNN and $d_x$ is the dimension of the input vector. $g_\theta$ takes as input a current state $\bv h\in\mathbb{Q}^{k}$ and input vector $\bv x\in\mathbb{Q}^{d_x}$, and produces the next state. Defining the initial state as $\bv h_0 = \bv 0$, an RNN can be applied to an input sequence $x \in(\mathbb{Q}^{d_x})^*$ one vector at a time to create a sequence of states $\{\bv h_t\}_{t\leq|x|}$, each representing an encoding of the prefix of $x$ up to that time step. RNNs can be used to encode sequences over a finite alphabet $x\in\Sigma^*$ by first applying a mapping (embedding) $e:\Sigma\rightarrow \mathbb{Q}^{d_x}$. 


\paragraph{Multi-layer RNNs} ``Deep'' RNNs are RNNs that have been arranged in $L$ stacked layers $R_1,...,R_L$. In this setting, the series of output states $\bv h_1, \bv h_2,..., \bv h_{|x|}$ generated by each RNN on its input is fed as input to the layer above it, and only the first layer receives the original input sequence $x\in\Sigma^*$ as input.

The recurrent update function $g$ can take several forms. The original and most simple form is that of the \emph{Elman RNN}. Since then, more elaborate forms using gating mechanisms have become popular, among them  the LSTM, GRU, and QRNN.

\paragraph{Elman RNNs {\normalfont \citep{elman1990finding}}} Let $\bv x_t$ be a vector embedding of $x_t$. For brevity, we suppress the bias terms in this (and the following) affine operations.
\begin{equation}
    \bv h_t = \tanh(\bv W \bv x_t + \bv U \bv h_{t-1}) .
\end{equation}
\noindent We refer to the saturated Elman RNN as the \textit{s-RNN}. The s-RNN has $\Theta(1)$ space \citep{merrill-2019-sequential}.

\paragraph{LSTMs {\normalfont \citep{hochreiter1997long}}}
An LSTM is a gated RNN with a state vector $\bv h_t \in \mathbb{Q}^{k}$ and memory vector $\bv c_t \in \mathbb{Q}^{k}$.
\footnote{
With respect to our presented definition of RNNs, the concatenation of $\bv h_t$ and $\bv c_t$ can be seen as the recurrently updated state. 
However in all discussions of LSTMs we treat only $\bv h_t$ as the LSTM's `state', in line with common practice.
}

\begin{align}
    \bv f_t &= \sigma(\bm W^f \bv x_t + \bm U^f \bv h_{t-1}) \\
    \bv i_t &= \sigma(\bm W^i \bv x_t + \bm U^i \bv h_{t-1}) \\
    \bv o_t &= \sigma(\bm W^o \bv x_t + \bm U^o \bv h_{t-1}) \\
    \bv{\tilde c_t} &= \tanh(\bm W^c \bv x_t + \bm U^c \bv h_{t-1}) \\
    \bv c_t &= \bv f_t \odot \bv c_{t-1} + \bv i_t \odot \bv{\tilde c_t} \\
    \bv h_t &= \bv o_t \odot \tanh(\bv c_t) .
\end{align}
\noindent The LSTM can use its memory vector $\bv c_t$ as a register of counters \cite{weiss2018}. \citet{merrill-2019-sequential} showed that the s-LSTM has $\Theta(\log n)$ space.

\paragraph{GRUs {\normalfont \citep{cho2014learning}}}
Another kind of gated RNN is the GRU.
\begin{align}
    \bv z_t &= \sigma(\bm W^z \bv x_t + \bm U^z \bv h_{t-1}) \\
    \bv r_t &= \sigma(\bm W^r \bv x_t + \bm U^r \bv h_{t-1}) \\
    \bv u_t &= \tanh \big( \bm W^u \bv x_t + \bm U^u(\bv r_t \odot \bv h_{t-1}) \big) \label{eq:gru_squashed} \\
    \bv h_t &= \bv z_t \odot \bv h_{t-1} + (1 - \bv z_t) \odot \bv u_t .
\end{align}
\noindent \citet{weiss2018} found that, unlike the LSTM, the GRU cannot use its memory to count dynamically. \citet{merrill-2019-sequential} showed the s-GRU has $\Theta(1)$ space.

\paragraph{QRNNs} \citet{bradbury2016qrnn} propose QRNNs as a computationally efficient hybrid of LSTMs and CNNs. Let $*$ denote convolution over time, 
let $\bv W^z, \bv W^f, \bv  W^o\in\mathbb{Q}^{d_x\times w \times k}$ be convolutions with window length $w$,
and let $\bv X\in\mathbb{Q}^{n\times d_x}$ denote the matrix of $n$ input vectors. An \textit{ifo}-QRNN (henceforth referred to as a QRNN) with \emph{window length} $w$ is defined by $\bv W^z,\bv W^f$, and $\bv W^o$ as follows:
\begin{align}
    \bv Z &= \tanh(\bv W^z * \bv X) \\
    \bv F &= \sigma(\bv W^f * \bv X) \\
    \bv O &= \sigma(\bv W^o * \bv X) \\
    \bv c_t &= \bv f_t \odot \bv c_{t-1} + \bv i_t \odot \bv z_t \\
    \bv h_t &= \bv o_t \odot \bv c_t
\end{align}
\noindent where $\bv z_t, \bv f_t, \bv o_t$ are respectively rows of $\bv Z, \bv F, \bv O$.
A QRNN $Q$ can be seen as an LSTM in which all uses of the state vector $\bv h_t$ have been replaced with a computation over the last $w$ input tokens--in this way it is similar to a CNN.

The s-QRNN has $\Theta(\log n)$ space, as the analysis of \citet{merrill-2019-sequential} for the s-LSTM directly applies. Indeed, any s-QRNN is also a ($\Sigma^w$)-restricted CM extended with ${=}{\pm}1$ (``set to ${\pm}1$'') operations.

\section{State Expressiveness}

\label{Sec:Hierarchy}
\begin{figure}
    \centering
    \def\svgwidth{\columnwidth}
    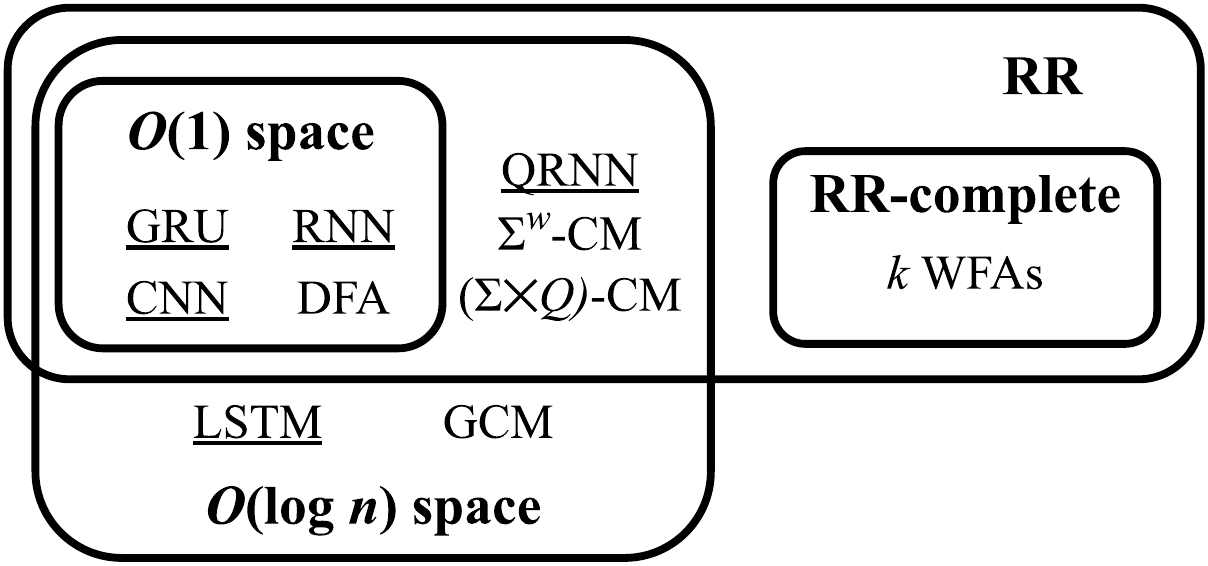
    \caption{Diagram of the relations between encoders. Neural networks are underlined. We group by asymptotic upper bound ($O$), as opposed to tight ($\Theta$). 
    \label{fig:hierarchy}}
\end{figure}

We now turn to presenting our results. 
In this section, we develop a hierarchy of single-layer RNNs based on their \capacity.
A set-theoretic view of the hierarchy is shown in  \autoref{fig:hierarchy}. 

Let $\rat$ be the set of rational series. The hierarchy relates $\Theta(\log n)$ space to the following sets:

\begin{itemize}
    \item \textbf{RR} As in \citet{peng2018rational}, we say that \mDet \model is \textit{rationally recurrent (RR)} iff its \capacity is a subset of $\rat$. 
    \item \textbf{RR-hard} \mDet \model is \textit{RR-hard} iff its \capacity contains $\rat$. A Turing machine is RR-hard, as it can simulate any WFA.
    \item \textbf{RR-complete} Finally, \mdet \model is \textit{RR-complete} iff its \capacity is equivalent to $\rat$. A trivial example of an RR-complete \model is a vector of $k$ WFAs.
\end{itemize}

The different RNNs are divided between the intersections of these classes. In \autoref{sec:beyond-rr}, we prove that the s-LSTM, already established to have $\Theta(\log n)$ space, is not RR. In \autoref{sec:rational-counting}, we demonstrate that \modelpl with restricted counting ability (e.g., QRNNs) are RR, and in \autoref{sec:finite-rr}, we show the same for all \modelpl with finite state (CNNs, s-RNNs, and s-GRUs). In \autoref{sec:rr-completeness}, we demonstrate that none of these RNNs are RR-hard. In \autoref{sec:self-attention}, we extend this analysis from RNNs to self attention.

\subsection{Counting Beyond RR} \label{sec:beyond-rr}
We find that \modelpl like the s-LSTM---which, as discussed in \autoref{subsec:cms}, is ``aware" of its current counter values---are not RR. To do this, we construct $f_0:\{a,b\}^* \rightarrow \mathbb{N}$ that requires counter awareness to compute on strings of the form $a^*b^*$, making it not rational. We then construct an s-LSTM computing $f_0$ over $a^*b^*$.

Let $\num{a-b}{x}$ denote the number of $a$s in string $x$ minus the number of $b$s.

\begin{definition}[Rectified counting]
    \begin{equation*}
    f_0 : x \mapsto
    \begin{cases}
    \num{a-b}{x} & \textrm{if} \; \num{a-b}{x} > 0 \\
    0 & \textrm{otherwise} .
    \end{cases}
\end{equation*}
\end{definition}

\begin{lemma} \label{thm:triangular-switch-counter} For all $f:\{a,b\}^* \rightarrow\mathbb{N}$, if $f(a^ib^j) = f_0(a^ib^j)$ for all $i,j\in\mathbb{N}$, then $f \not\in \rat$ .
\end{lemma}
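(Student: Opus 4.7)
The plan is to invoke Theorem \ref{thm:rank} (Carlyle--Paz / Fliess): $f \in \rat$ would require $H_f$ to have finite rank, so it suffices to exhibit an infinite-rank sub-block of $H_f$. I would work with the prefix set $P = \{a^i : i \in \mathbb{N}\}$ and suffix set $S = \{b^j : j \in \mathbb{N}\}$, since this is exactly where the hypothesis pins $f$ down: the resulting sub-block $H$ has entries $H_{i,j} = f(a^i b^j) = f_0(a^i b^j) = \max(i-j, 0)$.

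Next I would show $H$ has infinite rank by proving that, for every $N$, its first $N+1$ rows are linearly independent. Suppose $\sum_{i=0}^N \alpha_i H_{i,\cdot} = 0$ as a row vector. Evaluating at column index $j = N-1$ kills every term with $i \leq N-1$ and leaves $\alpha_N \cdot 1 = 0$, so $\alpha_N = 0$. Evaluating next at $j = N-2$ now leaves only $\alpha_{N-1} \cdot 1 + \alpha_N \cdot 2 = \alpha_{N-1} = 0$. Proceeding by downward induction on $j$, at each stage all previously-zeroed coefficients drop out and the new bottom row forces one more $\alpha_i$ to vanish. Since $N$ was arbitrary, $H$ contains arbitrarily large linearly independent sets of rows, hence has infinite rank. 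As the rank of a sub-block lower-bounds the rank of $H_f$, we conclude $H_f$ has infinite rank and therefore $f \notin \rat$ by Theorem \ref{thm:rank}.

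The key modeling step is choosing $P$ and $S$ so that the hypothesis determines the sub-block exactly; everything after that is a routine triangular-elimination argument. The one subtlety to get right is the order of suffix evaluations: picking the suffix $b^{N-1}$ first, then $b^{N-2}$, and so on, is what makes each elimination isolate a single $\alpha_i$ with coefficient $1$, since $\max(i-(i-1), 0) = 1$ while $\max(i'-(i-1),0) = 0$ for $i' < i$. With a naive evaluation order one would instead face a dense linear system, so flagging the right suffix sequence is the main thing to be careful about.
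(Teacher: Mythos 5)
Your proof is correct and takes essentially the same route as the paper: both restrict attention to the sub-block of $H_f$ indexed by prefixes $a^*$ and suffixes $b^*$, exploit the lower-triangular structure of the entries $\max(i-j,0)$ to conclude the sub-block's rank grows without bound, and then invoke \autoref{thm:rank}. One small slip worth noting: the row $i=0$ of your sub-block is identically zero, so your elimination never constrains $\alpha_0$ and actually shows that rows $1,\dots,N$ (not all $N+1$ rows) are linearly independent --- but this still yields rank at least $N$ for every $N$, which is all the argument needs.
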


\begin{proof}
Consider the Hankel sub-block $\bv A_n$ of $H_{f}$ with prefixes $P_n=\{a^i\}_{i\leq n}$ and suffixes $S_n=\{b^j\}_{j\leq n}$. $\bv A_n$ is lower-triangular:

\begin{equation}
    \begin{pmatrix}
    0 & 0 & 0 & \cdots \\
    1 & 0 & 0 & \cdots \\
    2 & 1 & 0 & \cdots \\
    \vdots & \vdots & \vdots &\ddots
    \end{pmatrix} .
\end{equation}

\noindent Therefore $\mathrm{rank}(\bv A_n)=n{-}1$.
Thus, for all $n$, there is a sub-block 
of $H_{f}$ with rank $n - 1$,
and so
$\mathrm{rank}(H_{f})$ is unbounded. It follows from \autoref{thm:rank} that there is no WFA computing $f$.
\end{proof}

\begin{theorem}
    The s-LSTM is not RR.
\end{theorem}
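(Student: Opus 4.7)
The plan is to invoke \autoref{thm:triangular-switch-counter} by constructing an s-LSTM whose recurrent state computes $f_0$ on strings of the form $a^i b^j$; since no rational series agrees with $f_0$ on $a^* b^*$, this will place a non-rational function inside the s-LSTM's state expressiveness.

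I will use a one-dimensional s-LSTM and arrange its weights so that the cell state $c_t$ realises a rectified counter: $c_t = c_{t-1} + 1$ on $a$, $c_t = c_{t-1} - 1$ on $b$ when $c_{t-1} > 0$, and $c_t = 0$ on $b$ when $c_{t-1} = 0$. The enabling observation is that under saturation, with the output gate driven to $o_t = 1$, the previous hidden state $h_{t-1}$ collapses to $\mathbbm{1}[c_{t-1} > 0]$ and can be fed back to the forget and input gates as a zero-check on the counter. With one-hot embeddings of $a$ and $b$ and a scalar coupling to $h_{t-1}$, I wire the forget and input gate pre-activations to be strictly positive in the three cases $(a,0),(a,1),(b,1)$ and strictly negative in $(b,0)$, so that $f_t = i_t = 1$ except in $(b,0)$, where both gates close. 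The candidate pre-activation is wired to be strictly positive on $a$ and strictly negative on $b$, so $\tilde c_t \to +1$ and $\tilde c_t \to -1$ respectively. These are a finite collection of strict-sign inequalities on a handful of scalar weights and biases, and are easily satisfiable.

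A short induction on prefixes of $a^* b^*$ then establishes the joint invariant $c_t = f_0(a^i b^j)$ and $h_t = \mathbbm{1}[c_t > 0]$ after reading $a^i b^j$. The state of this parameterization therefore coincides with $f_0$ on $a^* b^*$, and by \autoref{thm:triangular-switch-counter} this function is not in $\rat$. Hence the s-LSTM's state expressiveness is not contained in $\rat$, so the s-LSTM is not RR.

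The main obstacle is the routine but unavoidable bookkeeping of the saturation limit: one must check that every one of the gate and candidate pre-activations takes a strictly nonzero sign in each of the four $(x_t, h_{t-1}) \in \{a,b\}\times\{0,1\}$ configurations, so that the $N \to \infty$ limit collapses them cleanly to the intended $\{0,1\}$ and $\{\pm 1\}$ values used in the induction. Everything else---the invariant, and the appeal to \autoref{thm:triangular-switch-counter}---is immediate once the construction is in place.
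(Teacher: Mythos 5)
Your proposal is correct and follows essentially the same route as the paper: the same appeal to \autoref{thm:triangular-switch-counter}, the same one-unit rectified counter in $\bv c_t$, and the same key mechanism of feeding $h_{t-1}=\tanh(c_{t-1})$ back into the gate pre-activations as a zero-check so that the update can condition on whether the counter is zero. The only (immaterial) difference is that you close both the forget and input gates in the $(b,0)$ case, whereas the paper fixes $f_t=o_t=1$ and closes only the input gate; since $c_{t-1}=0$ there, both yield $c_t=0$.
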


\begin{proof}
Assume the input has the form $a^ib^j$ for some $i,j$.
Consider the following LSTM
\footnote{In which $f_t$ and $o_t$ are set to $1$, such that $c_t = c_{t-1} + i_t \tilde c_t$.}:
\begin{align}
    i_t &= \sigma \big( 10N h_{t-1} - 2N \mathbbm{1}_{=b}(x_t) + N \big) \\
    \tilde c_t &= \tanh\big( N \mathbbm{1}_{=a}(x_t) - N \mathbbm{1}_{=b}(x_t) \big) \\
    c_t &= c_{t-1} + i_t \tilde c_t \\
    h_t &= \tanh(c_t) .
\end{align}

Let $N \rightarrow \infty$. Then $i_t=0$ iff $x_t=b$ and $h_{t-1} = 0$ (i.e. $c_{t-1} = 0$). Meanwhile, $\tilde c_t = 1$ iff $x_t = a$. The update term becomes
\begin{equation} \label{eq:lstm-update}
    i_t \tilde c_t = \begin{cases}
        1 & \textrm{if} \; x_t = a \\
        -1 & \textrm{if} \; x_t = b \; \textrm{and} \; c_{t-1} > 0 \\
        0 & \textrm{otherwise.}
    \end{cases}
\end{equation}

For a string $a^ib^j$, the update in \eqref{eq:lstm-update} is equivalent to the CM in \autoref{fig:slstm-cm}. Thus, by \autoref{thm:triangular-switch-counter}, the s-LSTM (and the general CM) is not RR.
\end{proof}

\begin{figure}
    \centering
    \begin{tikzpicture}[->,>=stealth',shorten >=1pt,auto,node distance=2.8cm, semithick]
    
    \node[initial,state]           (0) {$q_0$};
    
    \path (0) edge [loop above] node {$a/{+}1$} (0)
              edge [loop below] node {$b,{\neq}0/{-}1$} (0)
              edge [loop right]  node {$b,{=}0/{+}0$} (1);
    
    \end{tikzpicture}
    \caption{A $1$-CM computing $f_0$ for $x \in \{ a^ib^j \mid i, j \in \mathbb{N} \}$. Let $\sigma/{\pm}m$ denote a transition that consumes $\sigma$ and updates the counter by ${\pm}m$. We write $\sigma, {=}0 / {\pm}m$ (or ${\neq}$) for a transition that requires the counter is $0$.}
    \label{fig:slstm-cm}
\end{figure}
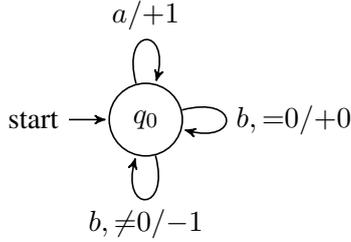

\subsection{Rational Counting} \label{sec:rational-counting}

While the counter awareness of a general CM enables it to compute non-rational functions, CMs that cannot view their counters are RR.

\begin{theorem} \label{thm:CM-level1}
Any $\Sigma$-restricted CM is RR.
\end{theorem}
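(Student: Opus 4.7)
The plan is to show that each coordinate of the counter vector is computed by a WFA, which by the definition of rational series then gives that the CM is RR. Since the CM is $\Sigma$-restricted, both $u$ and $\delta$ depend only on the current input symbol, so for each counter $i$ there is a fixed function $u_i : \Sigma \to \{\times 0, -1, +0, +1\}$ describing the updates, and the state trajectory plays no role in the counter values. In particular, I can analyze each of the $k$ counters independently and assemble the result.

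First I would put the final value of counter $i$ in closed form. Let $R_i = \{\sigma \in \Sigma : u_i(\sigma) = \times 0\}$ be the reset symbols for counter $i$, and define $\Delta_i(\sigma) \in \{-1, 0, +1\}$ to be $+1$ if $u_i(\sigma) = +1$, $-1$ if $u_i(\sigma) = -1$, and $0$ otherwise. Because $\times 0$ completely overwrites the counter and every other update is additive, for any input $x = x_1 \cdots x_n$ the final value $\bv c_{i,n}$ is the sum of $\Delta_i(x_j)$ over all positions $j$ strictly following the last occurrence of an $R_i$ symbol in $x$. Equivalently,
\begin{equation*}
  \bv c_{i,n} \;=\; \sum_{j=1}^n \Delta_i(x_j) \cdot \mathbbm{1}\!\left[x_{j+1}\cdots x_n \text{ contains no symbol in } R_i\right].
\end{equation*}

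Next I would realize this sum as a two-state WFA $A_i$ over $\mathbb{Q}$, using non-determinism to ``guess'' the position $j$ that contributes. Take states $q_0, q_1$ with $\lambda(q_0) = 1$, $\lambda(q_1) = 0$, $\rho(q_0) = 0$, $\rho(q_1) = 1$, and transition weights $\tau(q_0, \sigma, q_0) = 1$, $\tau(q_0, \sigma, q_1) = \Delta_i(\sigma)$, and $\tau(q_1, \sigma, q_1) = 1$ if $\sigma \notin R_i$ and $0$ otherwise, with every remaining transition weighted $0$. The only paths contributing nonzero weight commit to $q_1$ at some position $j$ and remain there, and such a path has total weight exactly $\Delta_i(x_j) \cdot \mathbbm{1}[x_{j+1}\cdots x_n \text{ avoids } R_i]$. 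Summing path scores via \autoref{def:string-score} thus yields $A_i[x] = \bv c_{i,n}$, so each counter is WFA-computable and the tuple $(A_1, \ldots, A_k)$ witnesses that the CM's encoding is a rational series.

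I expect the main obstacle to be essentially bookkeeping: justifying the closed form, and checking that the weights on the two ``copies'' of the state space correctly isolate the contributions of symbols after the last reset (in particular handling the corner cases $n = 0$ and ``no reset in $x$''). Conceptually, what makes the construction go through is that $\times 0$ is the only non-additive update available to the CM; any dependence of the update on a counter's current value (as in the s-LSTM) or the introduction of ``set to $\pm 1$'' operations (as in the s-QRNN) will break the clean decomposition and will need to be treated separately in the following subsections.
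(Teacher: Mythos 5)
Your proof is correct and takes essentially the same route as the paper: the paper likewise reduces to one WFA per counter by unrolling the affine recurrence $\bv c_t = \bv r(x_t)\,\bv c_{t-1} + \bv u(x_t)$ into $\bv c_t = \sum_{i=1}^t \left( \prod_{j=i+1}^t \bv r(x_j) \right) \bv u(x_i)$, and realizes each coordinate with exactly the two-state WFA you describe (\autoref{fig:sigma-CM}). Your indicator ``$x_{j+1}\cdots x_n$ contains no reset'' is precisely the product of the $0/1$ reset weights $\prod_{j'>j} r_i(x_{j'})$, so the two presentations coincide.
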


\begin{proof}
We show that any function that a $\Sigma$-restricted CM can compute can also be computed by a collection of WFAs. The CM update operations (${-}1, {+}0, {+1},$ or ${\times}0$) can all be reexpressed in terms of functions $\bv r(x), \bv u(x) : \Sigma^* \rightarrow \mathbb{Z}^k$ to get:
\begin{align}
\bv c_t &= \bv r(x_t) \bv c_{t-1} + \bv u(x_t) \\
\bv c_t &= \textstyle \sum_{i=1}^t \left( \prod_{j=i + 1}^t \bv r(x_j) \right) \bv u(x_i) \label{eq:unrolled}.
\end{align}
\noindent A WFA computing $[\bv c_t]_i$ is shown in \autoref{fig:sigma-CM}.
\end{proof}

The WFA in \autoref{fig:sigma-CM} also underlies unigram rational RNNs \citep{peng2018rational}. Thus, $\Sigma$-restricted CMs are actually a special case of unigram WFAs. In \autoref{sec:restricted-counting}, we show the more general result:

\begin{theorem} \label{thm:CM-level2}
Any $(\Sigma \times Q)$-restricted CM is RR.
\end{theorem}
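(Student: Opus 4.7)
The plan is to generalize the unrolling from \autoref{thm:CM-level1}, now carrying the CM state $q$ through the computation. Since the updates of a $(\Sigma \times Q)$-restricted CM depend on the pair $(x_t, q_{t-1})$ rather than on $x_t$ alone, I would first recode each counter operation in $\{{-}1, {+}0, {+}1, {\times}0\}$ as a retention factor $r_i(\sigma, q) \in \{0,1\}$ together with an additive value $u_i(\sigma, q) \in \{-1, 0, +1\}$, so that
\begin{equation*}
    [\bv c_t]_i = r_i(x_t, q_{t-1})\,[\bv c_{t-1}]_i + u_i(x_t, q_{t-1}).
\end{equation*}
Unrolling this recurrence yields
\begin{equation*}
    [\bv c_t]_i = \sum_{s=1}^{t} u_i(x_s, q_{s-1}) \prod_{j=s+1}^{t} r_i(x_j, q_{j-1}),
\end{equation*}
the analogue of \eqref{eq:unrolled} with $q$-indexed factors. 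The crucial point making a WFA construction possible is that because $\delta$ is also counter-blind, the trajectory $q_0, q_1, \ldots, q_t$ is a deterministic function of the input prefix, so it can be tracked inside the WFA's state set.

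For each counter index $i$, I would build a WFA $A_i$ whose state set is $Q \sqcup Q'$, where $Q'$ is a disjoint copy of $Q$. Intuitively, states in $Q$ are \emph{dormant} (the summation index $s$ has not yet been chosen), while states in $Q'$ are \emph{active} (summation index chosen, now accumulating retention factors). The initial weight is $\lambda(q_0) = 1$ with all others zero, and the final weights are $0$ on $Q$ and $1$ on $Q'$. From $q \in Q$ on $\sigma$ there are two outgoing edges: one to $\delta(\sigma, q) \in Q$ with weight $1$ (remain dormant) and one to $\delta(\sigma, q)' \in Q'$ with weight $u_i(\sigma, q)$ (activate, contributing the additive update). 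From $q' \in Q'$ on $\sigma$ there is a single edge to $\delta(\sigma, q)' \in Q'$ with weight $r_i(\sigma, q)$. An accepting path on $x$ is then determined by its activation time $s \in \{1, \ldots, |x|\}$, and its weight is exactly one summand of the unrolled formula; summing over paths as in \autoref{def:string-score} recovers $[\bv c_{|x|}]_i$. Producing such a WFA for each $i$ exhibits the CM's output as a rational series, i.e., shows the CM is RR.

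The main obstacle I expect is bookkeeping rather than real difficulty: verifying that the nondeterministic ``switch'' from $Q$ to $Q'$ inside $A_i$ enumerates each value of $s$ exactly once, so that the sum over paths matches the unrolled recurrence, and checking boundary cases such as the empty string (where the unique path ends in a zero-weighted dormant state) and inputs on which no activation yields a nonzero additive contribution. Once this path/index correspondence is established, correctness of the initial, transition, and final weights is mechanical, and the theorem follows as a natural extension of \autoref{thm:CM-level1}.
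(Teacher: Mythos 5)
Your construction is essentially identical to the paper's proof in \autoref{sec:restricted-counting}: your ``dormant'' copy $Q$ is the paper's \emph{add graph}, your ``active'' copy $Q'$ is its \emph{multiply graph}, the cross-transition weighted $u_i(\sigma,q)$ matches the paper's $\sigma/m$ edge for each additive update, and the retention weights $r_i(\sigma,q)\in\{0,1\}$ reproduce the multiply graph's $0$-weight for ${\times}0$ and $1$ otherwise. The path-per-summation-index correspondence you describe is exactly the paper's argument that the sum of path scores equals the counter value, so the proposal is correct and takes the same route.
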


In many rational RNNs, the updates at different time steps are independent of each other outside of a window of $w$ tokens. \autoref{thm:CM-level2} tells us this independence is not an essential property of rational encoders. Rather, any CM where the update is conditioned by finite state (as opposed to being conditioned by a local window) is in fact RR.

Furthermore, since $(\Sigma^w)$-restricted CMs are a special case of $(\Sigma \times Q)$-restricted CMs, \autoref{thm:CM-level2} can be directly applied to show that the s-QRNN is RR. See \autoref{sec:restricted-counting} for further discussion of this.


\subsection{Finite-Space RR} \label{sec:finite-rr}

\autoref{thm:CM-level2} motivates us to also think about finite-space \modelpl: i.e., \modelpl with no counters" where the output at each prefix is fully determined by a finite amount of memory. The following lemma implies that any finite-space \model is RR:

\begin{lemma}
Any function $f : \Sigma^* \rightarrow \mathbb{Q}$ computable by a $\Theta(1)$-space \model is a rational series.
\end{lemma}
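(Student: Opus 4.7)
The plan is to leverage Theorem~\ref{thm:rank} (Carlyle--Fliess): it suffices to show that $H_f$ has finite rank, and the $\Theta(1)$-space assumption gives us exactly the finiteness we need. More precisely, by definition of $\Theta(1)$ space, there is some constant $C$ (independent of $n$) such that the set $\{s_{M_\theta}(x) \mid x \in \Sigma^*\}$ of reachable minimal configurations has size at most $C$.

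First, I would argue that if two prefixes $\alpha, \alpha' \in \Sigma^*$ satisfy $s_{M_\theta}(\alpha) = s_{M_\theta}(\alpha')$, then for every suffix $\omega \in \Sigma^*$ we have $f(\alpha \concat \omega) = f(\alpha' \concat \omega)$. This is the crucial step and is essentially the content of ``minimal representation'': since $s_{M_\theta}(\alpha)$ is defined to contain all the information needed to continue the computation correctly, reading $\omega$ from either configuration must yield the same final output. Hence the row of $H_f$ indexed by $\alpha$ coincides with the row indexed by $\alpha'$.

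Consequently, $H_f$ has at most $C$ distinct rows, and therefore $\mathrm{rank}(H_f) \leq C < \infty$. Applying Theorem~\ref{thm:rank}, there exists a WFA $A$ with $A[x] = f(x)$ for all $x$, so $f$ is a rational series (a rational series with $k=1$).

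The only subtlety I anticipate is justifying the ``minimality'' step rigorously: one needs to be careful that $s_{M_\theta}(x)$ really does capture all continuation-relevant information, so that equal minimal configurations force equal future outputs on every suffix. Given the way minimal representation is defined in the paper (``the minimal state representation needed to compute $M_\theta$ correctly''), this should follow directly from the definition, but it is the point where the argument could be challenged if one chose a non-minimal notion of configuration. Everything else is a straightforward invocation of \autoref{thm:rank}.
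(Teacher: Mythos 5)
Your proof is correct, but it takes a genuinely different route from the paper's. You go through the Hankel-matrix characterization: equal minimal configurations force equal outputs on every continuation, so $H_f$ has at most $C$ distinct rows, hence finite rank, and Theorem~\ref{thm:rank} supplies the WFA. The paper instead gives a direct construction: it observes that the finitely many configurations form a DFA whose states can be labelled by the corresponding values of $f$, and converts this DFA into a WFA by putting initial weight $1$ on the start state, weight $1$ on every transition, and final weight $f$-value on each state (determinism guarantees a unique path per string, so the path-sum is exactly $f(x)$). Both arguments hinge on the same Myhill--Nerode-style fact that you correctly isolate as the crucial step --- that the minimal configuration after a prefix determines the output after any suffix --- and the paper's construction implicitly uses it just as much as yours does (it is what makes ``there exists a DFA $A_f$'' true). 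What the paper's route buys is an explicit WFA with exactly one state per configuration; what your route buys is that you never have to build the automaton, only count rows, and the resulting rank bound $\mathrm{rank}(H_f) \leq C$ is the same quantity. Your flagged subtlety about minimality is real but harmless here: the paper's footnoted definition of $s_{M_\theta}(x)$ as the minimal representation needed to compute $M_\theta$ correctly is precisely the property you need, and the paper's own proof would be equally vulnerable to a non-minimal reading.
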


\begin{proof}
Since $f$ is computable in $\Theta(1)$ space, there exists a DFA $A_f$ whose accepting states are isomorphic to the range of $f$. We convert $A_f$ to a WFA by labelling each accepting state by the value of $f$ that it corresponds to. We set the starting weight of the initial state to $1$, and $0$ for every other state. We assign each transition weight $1$.
\end{proof}

\noindent Since the CNN, s-RNN, and s-GRU have finite state, we obtain the following result:

\begin{theorem} \label{thm:finite-state}
The CNN, s-RNN, and s-GRU are RR.
\end{theorem}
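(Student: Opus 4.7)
The plan is to reduce the theorem directly to the preceding lemma, which says that any function $\Sigma^* \to \mathbb{Q}$ computable in $\Theta(1)$ space is a rational series. Since the encoders in question output vectors in $\mathbb{Q}^k$, and $\bv f$ is defined to be a rational series exactly when each of its $k$ coordinate functions is computed by a WFA, it suffices to argue that every coordinate of each encoder's output is a $\Theta(1)$-space function and then apply the lemma coordinate-wise.

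For the s-RNN and s-GRU, there is nothing further to do: the excerpt already invokes \citet{merrill-2019-sequential} to state that both have $\Theta(1)$ space, so each coordinate of $\bv h_t$ takes values in a finite set that depends only on the current configuration, and the lemma applies. The only model that requires an explicit argument is the CNN. Here I would observe that a CNN with window length $w$ computes $\bv h_t$ as a fixed function of $(x_{t-w+1}, \ldots, x_t)$ (padded at the left boundary), so the full configuration after reading any prefix is determined by at most the last $w$ input symbols. This gives at most $\sum_{i=0}^{w} |\Sigma|^i$ distinct configurations across all inputs, independent of $n$, so the CNN is a $\Theta(1)$-space encoder.

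Having established $\Theta(1)$ space for all three architectures, the final step is to pass from the scalar statement of the lemma to the vector-valued setting: for each $1 \le i \le k$, the coordinate map $x \mapsto [M_\theta(x)]_i$ factors through the (finite) configuration of $M_\theta$ after reading $x$, hence is itself $\Theta(1)$-space and therefore a rational series by the lemma. By the definition of rational series, $M_\theta$ is then a rational series, and since this holds for every $\theta$, the state expressiveness of each encoder is a subset of $\rat$, i.e., each is RR.

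I do not anticipate a real obstacle: the lemma does the heavy lifting, and the $\Theta(1)$-space claims for the s-RNN and s-GRU are already in the literature. The only mildly delicate point is being explicit that the finiteness of CNN configurations holds uniformly in $n$ (it does, because the dependence is bounded by the fixed window size $w$), and that the coordinate-wise reduction to the scalar lemma is valid under the paper's definition of a rational series.
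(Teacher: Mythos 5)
Your proposal is correct and follows the same route as the paper: the paper derives the theorem directly from the preceding lemma on $\Theta(1)$-space encoders, relying on the already-cited finite-state property of the CNN, s-RNN, and s-GRU. You simply make explicit two steps the paper leaves implicit (the window-size bound on CNN configurations and the coordinate-wise application of the scalar lemma), both of which are sound.
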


\noindent While \citet{schwartz2018sopa} and \citet{peng2018rational} showed the CNN to be RR over the max-plus semiring, \autoref{thm:finite-state} shows the same holds for $\langle \mathbb{Q}, \cdot, + \rangle$.

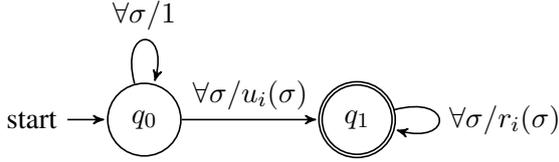
\begin{figure}
    \centering
    \begin{tikzpicture}[->,>=stealth',shorten >=1pt,auto,node distance=2.8cm, semithick]
    
    \node[initial,state]           (0) {$q_0$};
    \node[accepting,state]         (1) [right of=0] {$q_1$};
    
    \path (0) edge [loop above] node {$\forall \sigma/1$} (0)
              edge              node {$\forall \sigma/u_i(\sigma)$} (1)
          (1) edge [loop right] node {$\forall \sigma/r_i(\sigma)$} (1);
    
    \end{tikzpicture}
    \caption{WFA simulating unit $i$ of a $\Sigma$-restricted CM. Let $\forall \sigma/w(\sigma)$ denote a set of transitions consuming each token $\sigma$ with weight $w(\sigma)$. We use standard DFA notation to show initial weights $\lambda(q_0) = 1, \lambda (q_1) = 0$ and accepting weights $\rho(q_0) = 0, \rho(q_1) = 1$.}
    \label{fig:sigma-CM}
\end{figure}

\subsection{RR Completeness} \label{sec:rr-completeness}

While ``rational recurrence" is often used to indicate the simplicity of an RNN architecture, we find in this section that WFAs are surprisingly computationally powerful.
\autoref{fig:binary} shows a WFA mapping binary string to their numeric value, proving WFAs have $\Theta(n)$ space.
We now show that none of our RNNs are able to simulate an arbitrary WFA, even in the unsaturated form.

\begin{theorem}
Both the saturated and unsaturated RNN, GRU, QRNN, and LSTM\footnote{As well as CMs.} are not RR-hard.
\end{theorem}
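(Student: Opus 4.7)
The plan is to exhibit a single rational series that no listed RNN (saturated or not), and no CM, can compute; since rational series are precisely the set $\rat$, this demonstrates none of these \modelpl have $\rat$ as a subset of their \capacity. The natural candidate is the binary-to-numeric map $f_{\text{bin}} : \{0,1\}^* \to \mathbb{N}$ of \autoref{fig:binary}, which sends $b_1 b_2 \cdots b_n \mapsto \sum_i b_i 2^{n-i}$ and is computed by a WFA, hence lies in $\rat$.

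The saturated cases follow from a space-complexity argument. Observe that $f_{\text{bin}}$ takes $2^n$ distinct values on $\{0,1\}^n$, so any encoder that computes $f_{\text{bin}}$ in some output coordinate must itself reach at least $2^n$ distinct configurations on inputs of length $n$, i.e.\ have $\Theta(n)$ space. By the results recalled in \autoref{sec:definitions}, the s-RNN and s-GRU have $\Theta(1)$ space, and the s-LSTM, s-QRNN, and every general CM have $\Theta(\log n)$ space. All of these are strictly smaller than $\Theta(n)$, so none can compute $f_{\text{bin}}$, and none is RR-hard.

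The unsaturated cases reduce to a magnitude bound exploiting the fact that $\tanh$ and $\sigma$ take values in $(-1,1)$ and $(0,1)$ respectively. For the Elman RNN, $\bv h_t = \tanh(\cdots) \in (-1,1)^k$ by definition, and an analogous check for the GRU update $\bv h_t = \bv z_t \odot \bv h_{t-1} + (1-\bv z_t) \odot \bv u_t$ with $\bv u_t \in (-1,1)^k$ gives $\bv h_t \in (-1,1)^k$ inductively (using $\bv h_0 = \bv 0$). For the LSTM and QRNN, the recurrence $\bv c_t = \bv f_t \odot \bv c_{t-1} + \bv i_t \odot \bv{\tilde c}_t$ with gates in $(0,1)$ and $\bv{\tilde c}_t \in (-1,1)^k$ gives $\lVert \bv c_t \rVert_\infty \leq t$ by induction, and then $\bv h_t = \bv o_t \odot \tanh(\bv c_t) \in (-1,1)^k$. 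Since each \modelpos state $\bv h_t$ is bounded in $(-1,1)^k$ uniformly in $n$, no coordinate of the state can take the value $f_{\text{bin}}(1 0^{n-1}) = 2^{n-1}$ for all large $n$, so no parameterization of any of these RNNs computes $f_{\text{bin}}$.

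The main obstacle is being careful about what ``computes $f_{\text{bin}}$'' means for a vector-valued \model: we need to rule out not just that the whole state matches $f_{\text{bin}}$, but that any coordinate of the state does, which is exactly what the coordinatewise magnitude bound delivers. A secondary subtlety is that for the LSTM the alternative choice of treating $\bv c_t$ (rather than $\bv h_t$) as the state only gives a linear rather than bounded envelope, but linear growth still falls short of the exponential growth of $f_{\text{bin}}$, so the same conclusion holds.
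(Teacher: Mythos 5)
Your proof is correct and takes essentially the same route as the paper: both use the binary-value function $f_b$ of \autoref{fig:binary} as a rational witness and rule it out via the bounded/linear growth of the cell states against its exponential growth. The only difference is that for the saturated models you substitute a space-complexity argument ($\Theta(1)$ or $\Theta(\log n)$ versus the $\Theta(n)$ needed for $2^n$ distinct outputs), which is a valid alternative, though the paper's magnitude bound already covers both the saturated and unsaturated cases uniformly.
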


\begin{proof}
Consider the function $f_b$ mapping binary strings to their value, e.g. $101 \mapsto 5$. The WFA in \autoref{fig:binary} shows that this function is rational.

The value of $f_b$ grows exponentially with the sequence length. On the other hand, the value of the RNN and GRU cell is bounded by $1$, and QRNN and LSTM cells can only grow linearly in time. 
Therefore, 
these \modelpl cannot compute $f_b$.
\end{proof}

In contrast, memory networks can have $\Theta(n)$ space. \autoref{sec:stack-rnns} explores this for stack RNNs.

\begin{figure}
    \centering
    \begin{tikzpicture}[->,>=stealth',shorten >=1pt,auto,node distance=2.8cm, semithick]
    
    \node[initial,state]           (0) {$q_0$};
    \node[accepting,state]         (1) [right of=0] {$q_1$};
    
    \path (0) edge [loop above] node {$\forall \sigma/1$} (0)
              edge              node {$\forall \sigma/\sigma$} (1)
          (1) edge [loop right] node {$\forall \sigma/2$} (1);
    
    \end{tikzpicture}
    \caption{A WFA mapping binary strings to their numeric value. This can be extended for any base $>2$. \citet{CFG-WFA} present a similar construction. Notation is the same as \autoref{fig:sigma-CM}.}
    \label{fig:binary}
\end{figure}
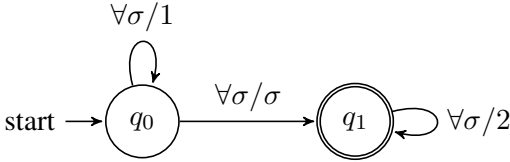

\subsection{Towards Transformers}

\autoref{sec:self-attention} presents preliminary results extending saturation analysis to self attention. We show saturated self attention is not RR and consider its space complexity. We hope further work will more completely characterize saturated self attention.
\section{Language Expressiveness} \label{sec:languages}
Having explored the set of functions expressible internally by different saturated RNN encoders, we
turn to the
languages recognizable when  using them 
with a decoder. 
We consider the following setup:

\begin{compactenum}
    \item An s-RNN encodes $x$ to a vector $\bv h_t \in \mathbb{Q}^k$.
    \item A \textit{decoder} function maps the last state $\bv h_t$ to an accept/reject decision, respectively: $\{1, 0\}$.
\end{compactenum}

We say that a language $L$ is decided by an encoder-decoder pair $\bv e, \bv d$ if $\bv d(\bv e(x))=1$  for every sequence $x \in L$ and otherwise $\bv d(\bv e(x))=0$. We explore which languages can be decided by different encoder-decoder pairings.

Some related results can be found in \citet{CFG-WFA}, who study the expressive power of WFAs in relation to CFGs under a slightly different definition of language recognition.

\subsection{Linear Decoders}
Let $\decfun{1}$ be the single-layer linear decoder
\begin{equation} \label{eq:decoder1}
    \decfun{1}(\bv h_t) \triangleq \mathbbm{1}_{>0}(\bv w \cdot \bv h_t + b) \in \{0, 1\}
\end{equation}
parameterized by $\bv w$ and $ b$.
For \mdet \model architecture $E$, we denote by $\MLPT{1}(E)$ the set of languages decidable by $E$ with $\decfun{1}$.
We use $\MLPT{2}(E)$ analogously 
for a 2-layer decoder with $\mathbbm{1}_{>0}$ activations, where the first layer has arbitrary width.

\subsection{A Decoder Adds Power}
We refer to sets of strings using regular expressions, e.g. $a^*=\{a^i \mid i\in \mathbb{N}\}$. To illustrate the purpose of the decoder, consider the following language:
\begin{equation} \label{eq:easy-lang}
    L_\leq = \{ x \in \{a, b\}^* \mid \num{a-b}{x} \leq 0 \} .
\end{equation}
The Hankel sub-block of the indicator function for $L_\leq$ over $P=a^*$, $S=b^*$ is lower triangular. Therefore, no RR encoder can compute it.

However, adding the $\MLPT{1}$ decoder allows us to compute this indicator function with an s-QRNN, which is RR. 
We set the s-QRNN layer to compute the simple series $c_t = \num{a-b}{x}$ (by increasing on $a$ and decreasing on $b$). 
The $\MLPT{1}$ layer then checks $c_t \leq 0$. 
So, while the indicator function for $L_\leq $ is not itself rational, it can be easily recovered from a rational representation. Thus, $L_\leq \in \MLPT{1}$(s-QRNN).

\subsection{Case Study: $\anbn$} \label{sec:anbn-summary}

We compare the language expressiveness of several rational and non-rational RNNs on the following:
\begin{align}
    a^nb^n &\triangleq \{ a^nb^n \mid n \in \mathbb{N} \} \\
    \contanbn &\triangleq \{a^nb^n(a|b)^* \mid 0 < n \} .
\end{align}
\noindent $a^nb^n$ is more interesting than $L_\leq$ because the $\MLPT{1}$ decoder cannot decide it simply by asking the \model to track $\num{a-b}{x}$, as that would require it to compute the non-linearly separable ${=}0$ function.
Thus, it appears at first that deciding $a^nb^n$ with $\MLPT{1}$ might require a non-rational RNN \model. 
However, we show below that this is not the case.

Let $\circ$ denote stacking two layers. We will go on to discuss the following results:
\begin{align}
    \anbn &\in \MLPT{1}(\clean{WFA}) \label{eq:anbn-wfa} \\
    \anbn &\in \MLPT{1}(\clean{s-LSTM}) \label{eq:anbn-lstm} \\
    \anbn &\not\in \MLPT{1}(\clean{s-QRNN}) \label{eq:anbn-qrnn} \\    
    \anbn &\in \MLPT{1}(\clean{s-QRNN $\circ$ s-QRNN}) \label{eq:anbn-qrnn2} \\
    \anbn &\in \MLPT{2}(\clean{s-QRNN}) \label{eq:anbn-qrnn+} \\
    \contanbn &\in \MLPT{1}(\clean{s-LSTM}) \label{eq:anbnw-lstm} \\
    \contanbn &\notin \MLPT{\ \!}(\clean{s-QRNN}) \clean{ for \emph{any} } \MLPT{\ \!} \label{eq:anbnw-qrnn*} \\
    \contanbn \cup \{ \epsilon \} &\in \MLPT{1}(\clean{s-QRNN $\circ$ s-QRNN}) \label{eq:anbnw-qrnn2}
\end{align}

\paragraph{WFAs {\normalfont (\autoref{sec:anbn})}} In \autoref{thm:anbnRR} we present a function $f:\Sigma^*\rightarrow\mathbb{Q}$ satisfying $f(x)>0$ iff $x\in\anbn$, and show that $H_f$ has finite rank. It follows that there exists a WFA that can decide $a^nb^n$ with the $\MLPT{1}$ decoder. Counterintuitively, $a^nb^n$ can be recognized using rational encoders.

\paragraph{QRNNs {\normalfont (\autoref{sec:sqrnn-anbn})}} Although $a^nb^n \in \MLPT{1}(\clean{WFA})$, it does not follow that every rationally recurrent model can also decide $\anbn$ with the help of $\MLPT{1}$.
Indeed, in \autoref{thm:anbn-qrnn}, we prove that 
$a^nb^n\notin \MLPT{1}(\clean{s-QRNN})$, whereas $a^nb^n \in \MLPT{1}(\clean{s-LSTM})$ (\autoref{thm:anbn-lstm}).


It is important to note that, with a more complex decoder, the QRNN \textit{could} recognize $\anbn$. 
For example, the s-QRNN can encode $c_1=\num{a-b}{x}$ and set $c_2$ to check whether $x$ contains $ba$, from which a $\MLPT{2}$ decoder can recognize $\anbn$ (\autoref{thm:qrnn2-anbn}).

This does not mean the hierarchy dissolves as the decoder is strengthened. 
We show that $\contanbn$---which seems like a trivial extension of $a^nb^n$---is not recognizable by the s-QRNN with \emph{any} decoder.

This result may appear counterintuitive, but in fact
highlights the s-QRNN's lack of counter awareness: it can only passively encode the information needed by the decoder to recognize $a^nb^n$. Failing to recognize that a valid prefix has been matched, it cannot act to preserve that information after additional input tokens are seen.
We present a proof in \autoref{thm:prefanbn-qrnn}. 
In contrast, in \autoref{thm:prefanbn-lstm} we show that the s-LSTM can directly encode an indicator for $\contanbn$ in its internal state.

\paragraph{Proof sketch:} $\contanbn\notin \MLPT{}(\clean{s-QRNN})$. A sequence $s_1\in \contanbn$ is shuffled to create $s_2\notin\contanbn$ with an identical 
multi-set of counter updates.\footnote{Since QRNN counter updates depend only on the $w$-grams present in the sequence.
}
Counter updates would be order agnostic if not for reset operations, and resets mask all history,
so extending $s_1$ and $s_2$ with a single suffix $s$ containing all of their $w$-grams
reaches the same final state. 
Then for any $D$, $\MLPT{}(\clean{s-QRNN})$ cannot separate them. We formalize this in \autoref{thm:prefanbn-qrnn}.

We refer to this technique as the \emph{suffix attack}, and note that
it can be used to prove for multiple other languages $L\in\MLPT{2}(\clean{s-QRNN})$
that $L\concat\Sigma^*$ is not in $\MLPT{}(\clean{s-QRNN})$ for any decoder $\MLPT{}$. 


\paragraph{2-layer QRNNs} Adding another layer overcomes the weakness of the 1-layer s-QRNN, at least for deciding $a^nb^n$. This follows from the fact that $a^nb^n \in \MLPT{2}(\clean{s-QRNN})$: the second QRNN layer can be used as a linear layer.

Similarly, we show in \autoref{thm:qrnn2-anbn} that a 2-layer s-QRNN can recognize $\contanbn \cup \{\epsilon\}$. This suggests that adding a second s-QRNN layer compensates for some of the weakness of the 1-layer s-QRNN, which, by the same argument for $\contanbn$ cannot recognize $\contanbn \cup \{\epsilon\}$ with any decoder.

\subsection{Arbitrary Decoder}

Finally, we study the theoretical case where the decoder is an arbitrary recursively enumerable (RE) function.
We view this as a loose upper bound of stacking many layers after a rational encoder.
What information is inherently lost by using a rational encoder? WFAs can uniquely encode each input, making them Turing-complete under this setup; however, this does not hold for rational s-RNNs.

\paragraph{RR-complete} Assuming an RR-complete encoder, a WFA like \autoref{fig:binary} can be used to encode each possible input sequence over $\Sigma$ to a unique number. 
We then use the decoder as an oracle to decide any RE language. 
Thus, an RR-complete encoder with an RE decoder is Turing-complete.

\paragraph{Bounded space} However, the $\Theta(\log n)$ space bound of saturated rational RNNs like the s-QRNN means these models cannot fully encode the input. 
In other words, some information about the prefix $x_{:t}$ must be lost in $\bv c_t$. 
Thus, rational s-RNNs are not Turing-complete with an RE decoder.
\section{Experiments}\label{Sec:Experiments}

\begin{figure}
    \centering
    \def\svgwidth{\columnwidth}
    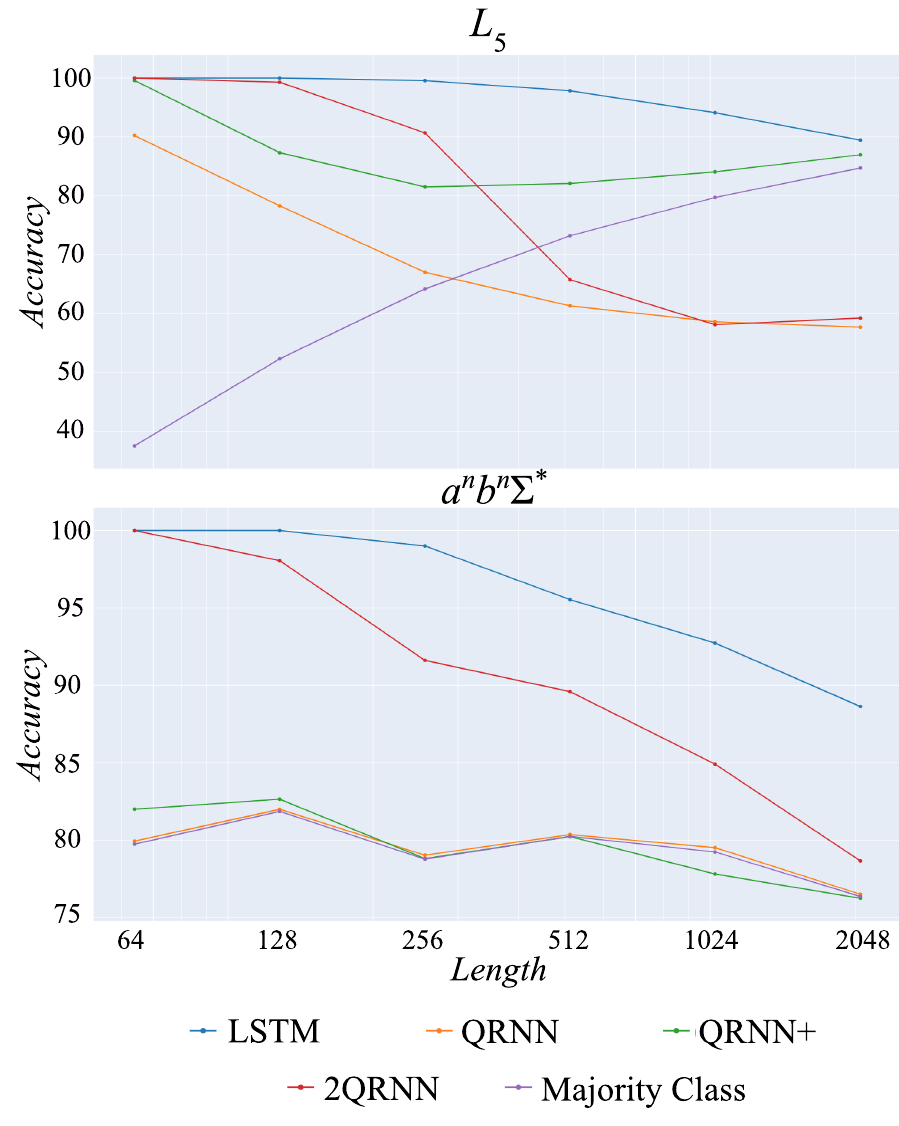
    \caption{Accuracy recognizing $L_5$
    and $\contanbn$.
    ``QRNN+" is a QRNN with a $2$-layer decoder, and ``2QRNN" is a $2$-layer QRNN with a $1$-layer decoder.
    \textbf{Experimental results updated in \autoref{sec:erratum}.}}
    \label{fig:generalization}
\end{figure}

In \autoref{sec:anbn-summary}, we showed that different saturated RNNs vary in their ability to recognize $\anbn$ and $\contanbn$.
We now test empirically whether these predictions carry over to the learnable capacity of unsaturated RNNs.\footnote{\url{https://github.com/viking-sudo-rm/rr-experiments}}
We compare the QRNN and LSTM
when coupled with a linear decoder $\MLPT{1}$.
We also train a $2$-layer QRNN (``QRNN2'') and a $1$-layer QRNN with a $\MLPT{2}$ decoder (``QRNN+'').

We train on strings of length $64$, and evaluate generalization on longer strings. We also compare to a baseline that always predicts the majority class.
The results are shown in \autoref{fig:generalization}.
We provide further experimental details in \autoref{sec:exp-details}.

\paragraph{Experiment 1} We use the following language, which has similar formal properties to $\anbn$, but with a more balanced label distribution:
\begin{equation} \label{eq:experiment}
    L_5 = \big\{ x \in (a|b)^* \; \mid \; \abs{\num{a-b}{x}} < 5 \big\} .
\end{equation}
In line with \eqref{eq:anbn-lstm}, the LSTM decides $L_5$ perfectly for $n\leq 64$, and generalizes fairly well to longer strings. As predicted in \eqref{eq:anbn-qrnn}, the QRNN cannot fully learn $L_5$ even for $n=64$. Finally, as predicted in \eqref{eq:anbn-qrnn2} and \eqref{eq:anbn-qrnn+}, the $2$-layer QRNN and the QRNN with $\MLPT{2}$ do learn $L_5$. However, we see that they do not generalize as well as the LSTM for longer strings. We hypothesize that these multi-layer models require more epochs to
reach
the same generalization performance as the LSTM.\footnote{As shown by the baseline, generalization is challenging because positive labels become less likely as strings get longer.}

\paragraph{Experiment 2} We also consider $\contanbn$. As predicted in \eqref{eq:anbnw-lstm} and \eqref{eq:anbnw-qrnn2}, the LSTM and $2$-layer QRNN decide $\contanbn$ flawlessly for $n=64$. A $1$-layer QRNN performs at the majority baseline for all $n$ with both a $1$ and $2$-layer decoder. Both of these failures were predicted in \eqref{eq:anbnw-qrnn*}. Thus, the only models that learned $\contanbn$ were exactly those predicted by the saturated theory.
\section{Conclusion} \label{Sec:Conclusion}
We develop a hierarchy of saturated RNN encoders, considering two angles: space complexity and rational recurrence.
Based on the hierarchy, we formally distinguish the state expressiveness of the non-rational s-LSTM and its rational counterpart, the s-QRNN.
We show further distinctions in state expressiveness based on encoder space complexity.

Moreover, the hierarchy translates to differences in language recognition capabilities.  
Strengthening the decoder alleviates some, but not all, of these differences.
We present two languages, both recognizable by an LSTM. 
We show that one can be recognized by an s-QRNN only with the help of a decoder,
and that the other cannot be recognized by an s-QRNN with the help of any decoder.

While this means existing rational RNNs are fundamentally limited compared to LSTMs, we find that it is not necessarily being rationally recurrent that limits them: in fact, we prove that a WFA can perfectly encode its input---something no saturated RNN can do.
We conclude with an analysis that shows that an RNN architecture's strength must also take into account its space complexity.
These results further our understanding of the inner working of NLP systems. We hope they will guide the development of more expressive rational RNNs.

\section*{Acknowledgments}
We appreciate Amir Yehudayoff's help in finding the WFA used in \autoref{thm:anbnRR}. We also thank our anonymous reviewers, Tobias Jaroslaw, Ana Marasovi\'{c}, and other researchers at the Allen Institute for AI.
The project was supported in part by NSF grant IIS-1562364, Israel Science Foundation grant no.1319/16, and the European Research Council under the EU's Horizon 2020 research and innovation program, grant agreement No. 802774 (iEXTRACT).

\bibliography{acl2020}
\bibliographystyle{acl_natbib}

\appendix
\section{Rational Counting} \label{sec:restricted-counting}

We extend the result in \autoref{thm:CM-level1} as follows.

\begin{theorem}
Any $(\Sigma \times Q)$-restricted CM is rationally recurrent.
\end{theorem}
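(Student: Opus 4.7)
The plan is to directly generalize the proof of \autoref{thm:CM-level1}, extending the WFA in \autoref{fig:sigma-CM} to track a DFA trajectory alongside each counter accumulator. The crucial observation is that because the machine is $(\Sigma \times Q)$-restricted, both $u$ and $\delta$ are blind to the zero-check of the counters, so the state sequence $q_0, q_1, \ldots, q_t$ induced by an input $x$ is completely determined by a deterministic finite automaton over $Q$.

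First I would set notation: for each counter index $i$, write the update rule as
\begin{equation*}
    [\bv c_t]_i = r_i(x_t, q_{t-1}) \cdot [\bv c_{t-1}]_i + u_i(x_t, q_{t-1}),
\end{equation*}
where $r_i \in \{0,1\}$ captures whether a $\times 0$ reset is applied and $u_i \in \{-1, 0, +1\}$ is the additive part. Unrolling the recurrence exactly as in \eqref{eq:unrolled} yields
\begin{equation*}
    [\bv c_t]_i = \sum_{j=1}^{t} u_i(x_j, q_{j-1}) \prod_{k=j+1}^{t} r_i(x_k, q_{k-1}).
\end{equation*}

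Next I would construct a WFA $A_i$ over the state set $Q_A = Q \times \{0, 1\}$, thought of as two parallel copies of the DFA. The state $(q, 0)$ means ``the DFA has reached $q$ and we have not yet committed to an additive term,'' while $(q, 1)$ means ``the DFA has reached $q$ and we are now accumulating reset factors.'' Set initial weights $\lambda((q_0, 0)) = 1$ and $0$ elsewhere, final weights $\rho((q, 1)) = 1$ and $0$ on the $0$-copy. On each $\sigma \in \Sigma$ and each $q$, include transitions $(q, 0) \to (\delta(\sigma, q), 0)$ with weight $1$, $(q, 0) \to (\delta(\sigma, q), 1)$ with weight $u_i(\sigma, q)$, and $(q, 1) \to (\delta(\sigma, q), 1)$ with weight $r_i(\sigma, q)$. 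Since $\delta$ is deterministic, every accepting path on input $x$ corresponds bijectively to a choice of ``commit time'' $j \in \{1, \ldots, t\}$, and its weight is exactly one term of the unrolled sum above, so $A_i[x] = [\bv c_t]_i$. Assembling the $k$ WFAs gives a rational series for the CM's encoding.

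I expect the main obstacle to be conceptual rather than computational: convincing oneself that the doubled-state construction really does enumerate exactly the sum-of-products decomposition, and checking the edge cases when every $r_i$ along some stretch is zero (so certain paths contribute $0$, matching the semantics of a $\times 0$ reset that wipes out earlier contributions). A minor sanity check is the empty string, where the only accepting path would have to live entirely in the $0$-copy and therefore contribute nothing, matching $\bv c_0 = \bv 0$. Beyond this, the argument is a clean lift of the $\Sigma$-restricted case, with $Q$ absorbed into the WFA's state space rather than the transition labels.
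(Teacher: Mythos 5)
Your construction is correct and is essentially the paper's own proof: your $0$-copy and $1$-copy of $Q$ are exactly the paper's ``add graph'' and ``multiply graph,'' with the crossing transition weighted by the additive update $u_i(\sigma,q)$ and the $1$-copy transitions weighted by the reset factor $r_i(\sigma,q)\in\{0,1\}$, so each accepting path realizes one term of the unrolled sum. The only cosmetic difference is that you keep weight-$0$ crossing transitions for $\times 0$ updates where the paper simply omits them, which changes nothing.
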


\begin{proof}
We present an algorithm to construct a WFA computing an arbitrary counter in a $(\Sigma \times Q)$-restricted CM. First, we create two independent copies of the transition graph for the restricted CM. We refer to one copy of the CM graph as the \textit{add graph}, and the other as the \textit{multiply graph}.

The initial state in the add graph receives a starting weight of $1$, and every other state receives a starting weight of $0$. Each state in the add graph receives an accepting weight of $0$, and each state in the multiply graph receives an accepting weight of $1$. In the add graph, each transition receives a weight of $1$. In the multiply graph, each transition receives a weight of $0$ if it represents $\times0$, and $1$ otherwise. Finally, for each non-multiplicative update $\sigma/{+}m$\footnote{Note that $m=-1$ for the ${-}1$ counter update.} from $q_i$ to $q_j$ in the original CM, we add a WFA transition $\sigma/m$ from $q_i$ in the add graph to $q_j$ in the multiply graph.

Each counter update creates one path ending in the multiply graph. The path score is set to $0$ if that counter update is ``erased'' by a ${\times}0$ operation. Thus, the sum of all the path scores in the WFA equals the value of the counter.
\end{proof}

This construction can be extended to accommodate ${=}m$ counter updates from $q_i$ to $q_j$ by adding an additional transition from the initial state to $q_j$ in the multiplication graph with weight $m$. This allows us to apply it directly to s-QRNNs, whose update operations include ${=}1$ and ${=}{-}1$.
\section{WFAs} \label{sec:anbn}

We show that while WFAs cannot directly encode an indicator for the language $\anbn=\{a^nb^n|\mid n\in \mathbb{N}\}$, they can encode a function that can be thresholded to recognize $\anbn$, i.e.:

\begin{theorem}\label{thm:anbnRR}
The language $\anbn=\{a^nb^n \mid n\in\mathbb{N}\}$ over $\Sigma=\{a,b\}$ is in $\MLPT{1}(\mathrm{WFA})$.
\end{theorem}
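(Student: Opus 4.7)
The plan is to exhibit a vector-valued rational series $A : \{a,b\}^* \to \mathbb{Q}^2$ together with decoder weights $\bv{w}, b$ such that, by the characterization of rational series as tuples of WFAs and the definition of $\decfun{1}$ in \eqref{eq:decoder1}, the pair $(A, \decfun{1})$ decides $\anbn$. Concretely, I will take
\[
    A[x] \;=\; \bigl( (\num{a}{x} - \num{b}{x})^2, \; \num{ba}{x} \bigr), \qquad \bv{w} = (-1, -1), \qquad b = \tfrac{1}{2},
\]
so the decoder outputs $\mathbbm{1}_{>0}\bigl(\tfrac{1}{2} - (\num{a}{x} - \num{b}{x})^2 - \num{ba}{x}\bigr)$, where $\num{ba}{x}$ denotes the number of $ba$ substrings in $x$.

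Correctness follows from a short case analysis on $x \in \{a,b\}^*$. If $x = a^n b^n$, then both coordinates of $A[x]$ vanish and the decoder sees $1/2 > 0$. If $x \in a^*b^*$ but $x \notin \anbn$, then $\num{ba}{x} = 0$ while $\num{a}{x} \neq \num{b}{x}$ forces $(\num{a}{x} - \num{b}{x})^2 \geq 1$, giving a decoder argument of at most $-1/2 < 0$. If $x \notin a^*b^*$, then $\num{ba}{x} \geq 1$, again yielding $\leq -1/2 < 0$. Since both coordinates of $A[x]$ are non-negative integers, the inequality $\tfrac{1}{2} - (\num{a}{x} - \num{b}{x})^2 - \num{ba}{x} > 0$ is equivalent to both coordinates being zero, i.e., $x \in \anbn$.

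The substantive step is verifying that each coordinate of $A$ is a rational series. The count $\num{ba}{x}$ is realized by a small two-state WFA that contributes weight $1$ each time a $ba$ transition is traversed, in the style of \autoref{fig:sigma-CM}. The scalar $\num{a}{x} - \num{b}{x}$ is likewise realized by the two-state template of \autoref{fig:sigma-CM} with $u(a) = +1$, $u(b) = -1$, and $r \equiv 1$. The main technical point is obtaining $(\num{a}{x} - \num{b}{x})^2$, which I will deduce from the closure of rational series over $\mathbb{Q}$ under Hadamard (pointwise) product---a standard result, provable either by tensoring the two underlying linear representations, or, via \autoref{thm:rank}, by observing that the Hankel matrix of a pointwise product is a submatrix of the tensor product of the factor Hankel matrices and hence has finite rank whenever they do. I expect this closure lemma to be the one point in the writeup that is not immediate; once it is in hand, bundling the two coordinates into a single vector-valued WFA encoder and applying $\decfun{1}$ completes the argument. (Alternatively, the square can be circumvented by expanding $(\num{a}{x} - \num{b}{x})^2 = \num{a}{x} + \num{b}{x} + 2(\#_{aa}(x) + \#_{bb}(x)) - 2(\#_{ab}(x) + \#_{ba}(x))$, where each ordered-pair-subsequence count is directly realized by a small WFA, avoiding any appeal to closure.)
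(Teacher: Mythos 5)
Your construction is correct, but it takes a genuinely different route from the paper's. The paper exhibits a single scalar function $f(x)=0.5-2(\num{a-b}{x})^2$ for $x\in a^*b^*$ and $f(x)=-0.5$ otherwise, and then proves rationality by a direct Hankel-rank computation: it splits $H_f$ into three explicit matrices supported on $a^*\times a^*b^*$, $a^*b^+\times b^*$, and everything, and shows by hand (via three basis series $r,s,t$ and an algebraic identity for $1-2(k-i)^2$) that these have ranks $3$, $3$, and $1$, so $\mathrm{rank}(H_f)=7$. You instead factor the same underlying idea---penalize $(\num{a}{x}-\num{b}{x})^2$ and separately penalize leaving $a^*b^*$---into a two-coordinate encoder $\bigl((\num{a}{x}-\num{b}{x})^2,\;\#_{ba}(x)\bigr)$ combined by the linear decoder, and you discharge rationality of the square by closure of rational series under Hadamard product (correct: $H_{fg}$ is the entrywise product $H_f\circ H_g$, a submatrix of $H_f\otimes H_g$, so its rank is at most $\mathrm{rank}(H_f)\cdot\mathrm{rank}(H_g)$ and \autoref{thm:rank} applies) or, even more elementarily, by your expansion into ordered-pair subsequence counts, each realized by a small chain WFA in the style of \autoref{fig:sigma-CM}. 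Your case analysis for the decoder is airtight, since both coordinates are non-negative integers that vanish simultaneously exactly on $\anbn$. What your approach buys is modularity and brevity: no bespoke rank decomposition is needed, and by linearity you could even collapse everything back into one scalar WFA. What the paper's approach buys is an explicit tight rank bound ($\mathrm{rank}(H_f)=7$), which it uses afterward to note that a complete sub-block over $\{a,b\}^{\leq 2}$ suffices to construct the WFA concretely via the spectral method; your tensor-product bound would only give a cruder state-count estimate. One cosmetic point: counting the $2$-gram $ba$ needs three states (you must remember the previous symbol), not two, but this does not affect correctness.
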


We prove this by showing a function whose Hankel matrix has finite rank that, when combined with the identity transformation (i.e., $w=1,b=0$) followed by thresholding, is an indicator for $\anbn$. Using the shorthand $\sigma(x)=\#_\sigma(x)$, the function is:
\begin{equation}
f(w)=
    \begin{cases}
    0.5-2(a(x)-b(x))^2 & \textrm{if} \; x \in a^*b^* \\
    -0.5 & \textrm{otherwise} .
    \end{cases}
\end{equation}

Immediately $f$ satisfies $\mathbbm{1}_{>0}(f(x)) \iff x \in \anbn$. To prove that its Hankel matrix, $H_f$, has finite rank, we will create $3$ infinite matrices of ranks $3,3$ and $1$, which sum to $H_f$. The majority of the proof will focus on the rank of the rank $3$ matrices, which have similar compositions. 

We now show $3$ series $r,s,t$ and a set of series they can be combined to create. These series will be used to create the base vectors for the rank $3$ matrices.
\begin{align}
a_i&=\frac{i(i+1)}{2}\\
b_i&=i^2-1\\
r_i&=\fixfun_0(i,a_{i-2})\\
s_i&=\fixfun_1(i,-b_{i-1})\\
t_i&=\fixfun_2(i,a_{i-1})
\end{align}
\noindent where for every $j\leq 2$,
\begin{equation}
    \fixfun_j(i,x) =
    \begin{cases}
      x & \textrm{if} \; i>2 \\
      1 & \textrm{if} \; i=j \\
      0 & \textrm{otherwise.}
    \end{cases}
\end{equation}
    
\begin{lemma} Let $c_i = 1-2i^2$ and $\{c^{(k)}\}_{k\in\mathbb{N}}$ be the set of series defined $c^{(k)}_i=c_{|i-k|}$. Then for every $i,k\in\mathbb{N}$,
\begin{equation*}
    c^{(k)}_i=c^{(k)}_0r_i+c^{(k)}_1s_i+c^{(k)}_2t_i .
\end{equation*}
\end{lemma}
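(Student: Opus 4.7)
The plan is to recognize the claimed identity as Lagrange interpolation at the nodes $\{0,1,2\}$ applied to the quadratic polynomial $i\mapsto c^{(k)}_i$. The key observation is that for any fixed $k$, since $|i-k|^2=(i-k)^2$, we have $c^{(k)}_i = 1-2(i-k)^2$, which is a polynomial in $i$ of degree at most $2$ valid for every $i\in\mathbb{N}$ (not only for $i\geq k$). Hence $c^{(k)}$ lives in the three-dimensional vector space of polynomials of degree at most $2$ in $i$, and it suffices to show that $r,s,t$ form the Lagrange basis of that space at the interpolation nodes $0,1,2$.

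Next I would verify this identification by checking that $r_i,s_i,t_i$ agree with the Lagrange basis polynomials $\ell_0(i)=(i-1)(i-2)/2$, $\ell_1(i)=-(i-1)^2+1$, $\ell_2(i)=i(i-1)/2$ on all of $\mathbb{N}$. For $i>2$ this is immediate from the definitions: $r_i=a_{i-2}=(i-2)(i-1)/2$, $s_i=-b_{i-1}=-(i-1)^2+1$, and $t_i=a_{i-1}=(i-1)i/2$. For $i\in\{0,1,2\}$ the $\fixfun$-clauses force $(r_i,s_i,t_i)$ to be the standard basis vectors $(\delta_{i,0},\delta_{i,1},\delta_{i,2})$, and a direct substitution shows that $\ell_0,\ell_1,\ell_2$ evaluate to exactly the same Kronecker-$\delta$ pattern at those indices. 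Thus $r_i=\ell_0(i)$, $s_i=\ell_1(i)$, $t_i=\ell_2(i)$ for every $i\in\mathbb{N}$.

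The final step is to invoke the standard fact that any polynomial $p$ of degree at most $2$ satisfies $p(i)=p(0)\ell_0(i)+p(1)\ell_1(i)+p(2)\ell_2(i)$ identically in $i$, since the difference is a degree-$\leq 2$ polynomial vanishing at three distinct points. Applying this with $p(i)=c^{(k)}_i$ (treating $k$ as a parameter) yields exactly $c^{(k)}_i=c^{(k)}_0 r_i + c^{(k)}_1 s_i + c^{(k)}_2 t_i$, as required.

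The main thing to be careful about, rather than a genuine obstacle, is the piecewise nature of the definitions of $r,s,t$: one must confirm that the $\fixfun$ values at $i\in\{0,1,2\}$ coincide with what the quadratic formulas in the $i>2$ branch would predict at those indices, so that the identification of $r,s,t$ with the Lagrange polynomials is consistent across the whole domain. This is a three-line case check, after which the lemma follows as a textbook consequence of polynomial interpolation.
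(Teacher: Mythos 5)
Your proof is correct, and it takes a genuinely different route from the paper's. The paper argues by brute force: it handles $i\in\{0,1,2\}$ via the ``select'' behaviour of $\fixfun$, and for $i>2$ it substitutes the definitions of $r,s,t$ and grinds through a full polynomial expansion in $i$ and $k$ until everything collapses to $1-2(k-i)^2$. You instead observe that $c^{(k)}_i = 1-2|i-k|^2 = 1-2(i-k)^2$ is a polynomial of degree at most $2$ in $i$, and that $r,s,t$ coincide everywhere with the Lagrange basis $\ell_0,\ell_1,\ell_2$ at the nodes $0,1,2$ (for $i>2$ this is a direct match of formulas, e.g.\ $r_i=a_{i-2}=(i-2)(i-1)/2=\ell_0(i)$, and for $i\in\{0,1,2\}$ both sides reduce to the same Kronecker-$\delta$ pattern), so the identity is just the interpolation formula $p(i)=p(0)\ell_0(i)+p(1)\ell_1(i)+p(2)\ell_2(i)$ for quadratics. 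Your version trades the paper's long but elementary expansion for a one-line appeal to a standard fact, and it has the added benefit of explaining \emph{why} three series suffice: the family $\{c^{(k)}\}_k$ lives in the three-dimensional space of quadratics in $i$, which is also the conceptual reason the subsequent rank-$3$ bounds on the matrices $A$ and $B$ hold. The only point requiring care --- that the piecewise $\fixfun$ values at $i\in\{0,1,2\}$ agree with the quadratic formulas' values there --- is exactly the case check you flag, and it goes through.
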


\begin{proof}
For $i \in \{0,1,2\}$, $r_i,s_i$ and $t_i$ collapse to a `select' operation, giving the true statement $c_i^{(k)}=c_i^{(k)}\cdot 1$.
We now consider the case $i>2$.
Substituting the series definitions in the right side of the equation gives

\begin{equation}
    c_{k}a_{i-2}+c_{|k-1|}(-b_{i-1})+c_{k-2}a_{i-1}
\end{equation}
\noindent which can be expanded to
\begin{align*}
&(1-2k^2) &&\cdot \; \frac{i^2-3i+2}{2} &&+ \\
&(1-2(k-1)^2) &&\cdot \; (1-(i-1)^2) &&+ \\
&(1-2(k-2)^2) &&\cdot \; \frac{(i-1)i}{2} .
\end{align*}

\noindent Reordering the first component and partially opening the other two gives
\begin{align*}
(-2k^2+1)\frac{i^2-3i+2}{2} &+\\
(-2k^2+4k-1)(2i-i^2) &+\\
(-k^2+4k-3.5)(i^2-i)
\end{align*}

\noindent and a further expansion gives
\begin{align*}
-k^2i^2&           +&&0.5i^2   +3k^2i          -1.5i  -2k^2 + 1 +\\
2k^2i^2&   -4ki^2     +&&i^2   -4k^2i  +8ki    -  2i            +\\
-k^2i^2&   +4ki^2  -&&3.5i^2   + k^2i  -4ki    +3.5i
\end{align*}
\noindent which reduces to
\begin{equation*}
    -2i^2+4ki-2k^2+1=1-2(k-i)^2=c^{(k)}_i .
\end{equation*}
\end{proof}

We restate this as:
\begin{corollary}\label{cor:base}
For every $k\in\mathbb{N}$, the series $c^{(k)}$ is a linear combination of the series $r,s$ and $t$.
\end{corollary}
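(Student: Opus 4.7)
The plan is to observe that the corollary is simply a repackaging of the preceding lemma, lifting a pointwise-in-$i$ scalar identity to a single equality in the vector space of rational sequences. Concretely, I would fix an arbitrary $k \in \mathbb{N}$ and treat the three values $\alpha_k = c^{(k)}_0$, $\beta_k = c^{(k)}_1$, $\gamma_k = c^{(k)}_2$ as scalar coefficients that depend only on $k$, not on $i$.

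With $k$ fixed, the lemma asserts that for every $i \in \mathbb{N}$,
\[
    c^{(k)}_i = \alpha_k \, r_i + \beta_k \, s_i + \gamma_k \, t_i.
\]
Since this equation holds componentwise for every $i$, and since equality of sequences in $\mathbb{Q}^{\mathbb{N}}$ is by definition componentwise equality, I can conclude $c^{(k)} = \alpha_k r + \beta_k s + \gamma_k t$. Because $k$ was arbitrary, this establishes the corollary. The only subtlety worth noting explicitly is that $\alpha_k, \beta_k, \gamma_k$ do not vary with $i$, which is clear from their definition as the values of $c^{(k)}$ at the three fixed indices $0,1,2$; this is what makes the right-hand side a genuine linear combination of $r,s,t$ rather than a sequence-dependent mixture.

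I do not anticipate any real obstacle: the substantive work was already done inside the lemma's algebraic expansion, and the corollary merely records the geometric consequence needed downstream — namely, that the entire family $\{c^{(k)}\}_{k \in \mathbb{N}}$ lies in the three-dimensional subspace $\mathrm{span}(r,s,t)$. This is precisely the form required to bound the rank of the corresponding infinite matrix by $3$ when it is later assembled as one of the three summands of $H_f$ in the proof of \autoref{thm:anbnRR}.
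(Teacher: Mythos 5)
Your proposal is correct and matches the paper exactly: the paper introduces this corollary with the words ``We restate this as,'' treating it as an immediate repackaging of the preceding lemma with coefficients $c^{(k)}_0, c^{(k)}_1, c^{(k)}_2$ depending only on $k$, which is precisely your argument. Your explicit remark that these coefficients do not vary with $i$ is the only (mild) subtlety, and you handle it correctly.
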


We can now show that $f$ is computable by a WFA, proving \autoref{thm:anbnRR}. By \autoref{thm:rank}, it is sufficient to show that $H_f$ has finite rank.

\begin{lemma}
$H_f$ has finite rank.
\end{lemma}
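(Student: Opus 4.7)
The plan is to write $H_f$ as a sum of three infinite matrices of finite rank: $H_f = M_A + M_B + C$, where $C$ is the constant matrix with every entry equal to $-\tfrac{1}{2}$ (rank $1$), and $M_A, M_B$ together carry the contribution of $g(x) \triangleq 1 - 2(\num{a}{x} - \num{b}{x})^2$ on the event $x \in a^*b^*$, so that $f = g\cdot\mathbbm{1}_{a^*b^*} - \tfrac{1}{2}$.

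First I would pin down the support of the non-constant part. A prefix-suffix pair $(u,v)$ has $uv \in a^*b^*$ iff either (A) $u \in a^*$ and $v \in a^*b^*$, or (B) $u = a^p b^s$ with $s \geq 1$ and $v \in b^*$; these two cases are disjoint on the prefix side, and every prefix outside $a^*b^* = a^* \cup a^*b^+$ contributes an all-zero row to $g$. I would then define $M_A$ to equal $g(uv)$ on rectangle (A) and zero elsewhere, $M_B$ analogously on (B), and verify by case analysis that $H_f + \tfrac{1}{2} J = M_A + M_B$ holds everywhere ($J$ = all-ones matrix).

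Next I would bound $\mathrm{rank}(M_A)$; $M_B$ is symmetric. On rectangle (A), writing $u = a^p$ and $v = a^q b^r$, the entry is $1 - 2(p + q - r)^2$. Setting $k := p$ (prefix-only) and $i := r - q$ (suffix-only) puts this in the form $c^{(k)}_i = 1 - 2(k-i)^2$ from \autoref{cor:base}. Expanding the square,
\[
    1 - 2(k-i)^2 = 1 - 2k^2 + 4ki - 2i^2,
\]
exhibits the restricted matrix as a sum of three rank-$1$ outer products (its column space sits in the span of $\{1,\ k,\ k^2\}$ viewed as functions of the prefix), so the restricted matrix has rank $\leq 3$; extending by zero rows and columns outside the rectangle does not change rank, giving $\mathrm{rank}(M_A) \leq 3$. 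The choice $k := p - s$, $i := r$ on rectangle (B) yields $\mathrm{rank}(M_B) \leq 3$ by the identical computation.

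Subadditivity of rank then gives $\mathrm{rank}(H_f) \leq 3 + 3 + 1 = 7$, and \autoref{thm:rank} produces the desired WFA. The main obstacle I anticipate is the first step — cleanly handling the disjoint-rectangle bookkeeping, in particular verifying that every $(u,v)$ outside both rectangles really lies in the zero region of $g$. One could alternatively invoke \autoref{cor:base} to span the relevant row vectors via $\{r,s,t\}$, but since $i = r - q$ on rectangle (A) ranges over all of $\mathbb{Z}$ rather than only $\mathbb{N}$, the direct quadratic expansion above is the more convenient route.
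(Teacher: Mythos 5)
Your proof is correct, and it uses exactly the same decomposition as the paper: $A=(H_f+0.5)|_{a^*,\,a^*b^*}$, $B=(H_f+0.5)|_{a^*b^+,\,b^*}$, and the constant matrix $C$ of value $-0.5$, with the same disjoint-rectangle bookkeeping establishing $H_f=A+B+C$ and $\mathrm{rank}(C)=1$. Where you genuinely diverge is in how the rank-$3$ bound for each rectangle is obtained. The paper routes this through \autoref{cor:base}: it constructs the explicit series $r,s,t$ (patched at small indices by the $\fixfun$ device) and proves the identity $c^{(k)}_i=c^{(k)}_0 r_i+c^{(k)}_1 s_i+c^{(k)}_2 t_i$, so that every row of $A$ (resp.\ column of $B$) is a combination of three fixed vectors. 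You instead expand $1-2(k-i)^2=(1-2i^2)\cdot 1+(4i)\cdot k+(-2)\cdot k^2$ and read off directly that the restricted block's columns lie in the span of the prefix-functions $1,k,k^2$; this is more elementary, and, as you note, it cleanly handles the fact that $i=b(v)-a(v)$ ranges over all of $\mathbb{Z}$ on rectangle (A), whereas the paper's series $r,s,t$ and the identity of \autoref{cor:base} are only given for indices in $\mathbb{N}$ --- a point the paper's write-up glosses over. What the paper's heavier machinery buys is the particular self-referential form of the coefficients ($c^{(k)}$ expressed via $c^{(k)}_0,c^{(k)}_1,c^{(k)}_2$), which is what lets it exhibit the complete sub-block $\tilde{H}_f$ over $\{a,b\}^{\leq 2}$, conclude that $\mathrm{rank}(H_f)=7$ exactly, and feed the result into the spectral construction of a concrete WFA. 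Your argument yields only the upper bound $\mathrm{rank}(H_f)\leq 7$, which is all that \autoref{thm:rank} requires for \autoref{thm:anbnRR}.
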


\begin{proof}
For every $P,S\subseteq\{a,b\}^*$, denote 
$$[H_f|_{P,S}]_{u,v} =
    \begin{cases*}
      [H_f]_{u,v} & if $u\in P$ and $v\in S$ \\
      0 & otherwise
    \end{cases*}
$$
Using regular expressions to describe $P,S$, we create the 3 finite rank matrices which sum to $H_f$:
\begin{align}
    A&=(H_f+0.5)|_{a^*,a^*b^*} \\
    B&=(H_f+0.5)|_{a^*b^+,b^*} \\
    C&=(-0.5)|_{u,v} .
\end{align}
Intuitively, these may be seen as a ``split'' of $H_f$ into sections as in \autoref{fig:break_H}, such that $A$ and $B$ together cover the sections of $H_f$ on which $u\concat v$ does not contain the substring $ba$ (and are equal on them to $H_f+0.5$), and $C$ is simply the constant matrix $-0.5$. Immediately, $H_f = A+B+C$, and $\mathrm{rank}(C)=1$.

\begin{figure}
    \centering
    \includegraphics[scale=.25]{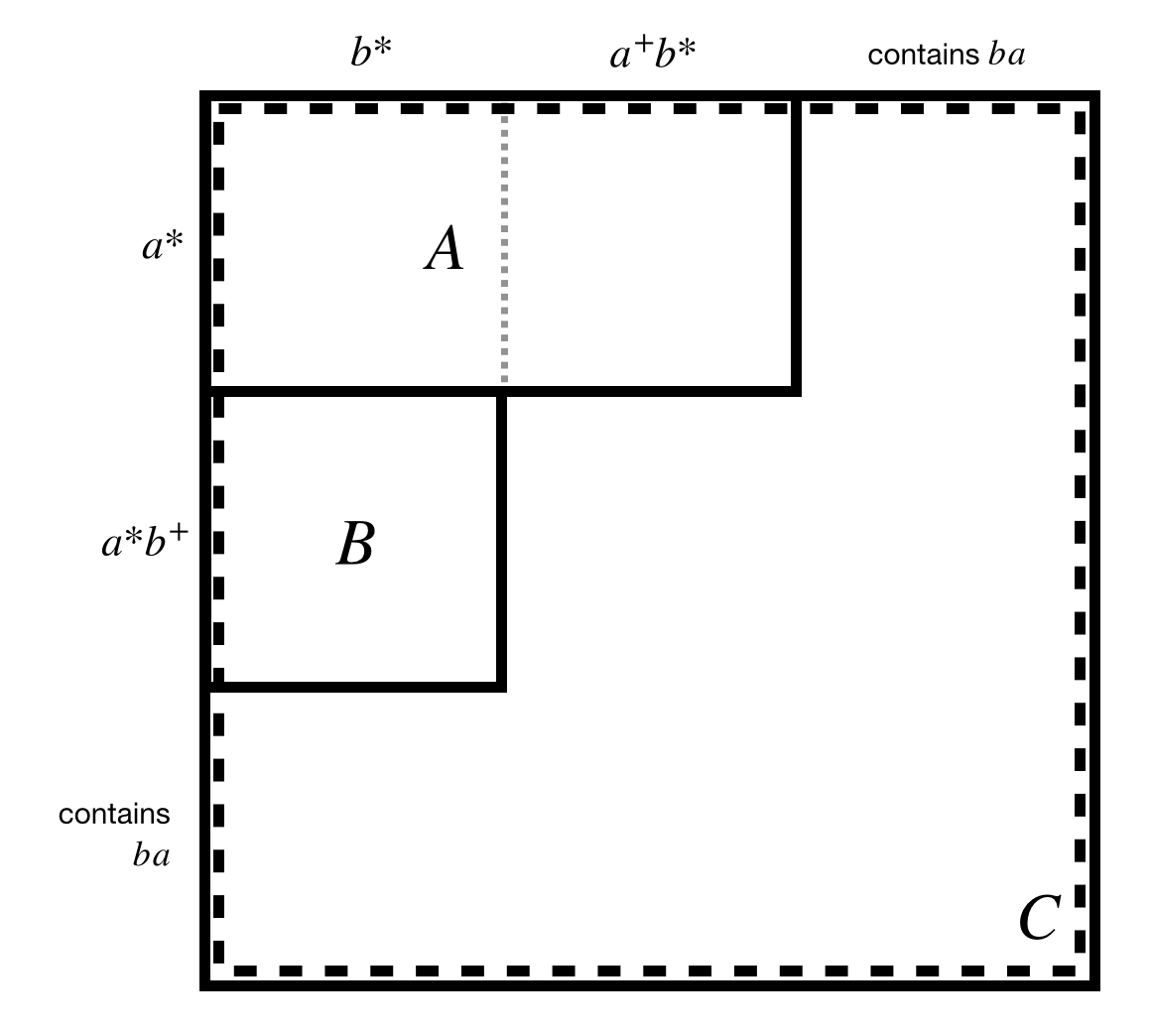}
    \caption{Intuition of the supports of $A,B$ and $C$.}
    \label{fig:break_H}
\end{figure}


We now consider $A$. Denote $P_A=a^*,S_A=a^*b^*$. $A$ is non-zero only on indices $u\in P_A,v\in S_A$, and for these, $u\concat v\in a^*b^*$ and $A_{u,v}=0.5+f(u\concat v)=1-2(a(u)+a(v)-b(v))^2$. This gives that for every $u\in P_A,v\in S_A$,
\begin{equation}
    A_{u,v}=c_{|a(u)-(b(v)-a(v))|}=c^{(a(u))}_{b(v)-a(v)} .
\end{equation}

\noindent For each $\tau \in \{r, s, t\}$, define $\tilde{\tau}\in\mathbb{Q}^{\{a,b\}^*}$ as
\begin{equation}
    \tilde{\tau}_v=\mathbbm{1}_{v\in a^*b^*}\cdot \tau_{b(v)-a(v)} .
\end{equation}

We get from \autoref{cor:base} that for every $u\in a^*$, the $u$th row of $A$ is a linear combination of $\tilde{r},\tilde{s},$ and $\tilde{t}$. The remaining rows of $A$ are all $\bv 0$ and so also a linear combination of these, and so $\mathrm{rank}(A)\leq 3$.

Similarly, we find that the nonzero entries of $B$ satisfy
\begin{equation}
    B_{u,v}=c_{|b(v)-(a(u)-b(u))|}= c^{(b(v))}_{a(u)-b(u)}
\end{equation}
and so, for $\tau \in \{r, s, t\}$, the columns of $B$ are linear combinations of the columns $\tau' \in \mathbb{Q}^{\{a,b\}^*}$ defined 
\begin{equation}
    \tau'_u=\mathbbm{1}_{u\in a^*b^+}\cdot \tau_{a(u)-b(u)} .
\end{equation}
\noindent Thus we conclude $\mathrm{rank}(B)\leq 3$.

Finally, $H_f=A+B+C$, and so by the subadditivity of rank in matrices,
\begin{equation}
    \mathrm{rank}(H_f)\leq \sum_{M=A,B,C}\mathrm{rank}(M)=7 .
\end{equation}
\end{proof}

In addition, the rank of $\tilde{H}_f\in \mathbb{Q}^{\{a,b\}^{\leq 2},\{a,b\}^{\leq 2}}$ defined $[\tilde{H}_f]_{u,v}=[H_f]_{u,v}$ is $7$, and so we can conclude that the bound in the proof is tight, i.e., $\mathrm{rank}(H_f)=7$. From here $\tilde{H}_f$ is a complete sub-block of $H_f$ and can be used to explicitly construct a WFA for $f$, using the spectral method described by \citet{Balle2014}.
\section{s-QRNNs} \label{sec:sqrnn-anbn}

\begin{theorem} \label{thm:anbn-qrnn}
No s-QRNN with a linear threshold decoder can recognize $\anbn=\{a^nb^n \mid n\in\mathbb{N}\}$, i.e., $\anbn\notin\MLPT{1}($s-QRNN$)$.
\end{theorem}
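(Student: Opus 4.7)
The plan is to exploit the fact that the s-QRNN, viewed as a $\Sigma^w$-restricted CM (with the $={\pm}1$ extension), must apply the \emph{same} update whenever the window of the last $w$ input tokens is the same. I will fix an arbitrary s-QRNN with window length $w$ and hidden dimension $k$, together with an arbitrary linear-threshold decoder $d_1(\bv h)=\mathbbm{1}_{>0}(\bv w\cdot\bv h+b)$, assume for contradiction that they decide $a^nb^n$, and then rule this out by considering inputs of the form $a^n b^m$ with $n>w$ fixed and $m\geq w$ varying.

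First, I would analyze the evolution of the counter vector $\bv c_t$ on $a^nb^m$. For every $t\geq n+w$, the window $x_{t-w+1}\ldots x_t$ equals $b^w$, so each coordinate of $\bv c_t$ applies a single operation drawn from $\{{+}0,{+}1,{-}1,{\times}0,{=}1,{=}{-}1\}$, fixed per coordinate. After at most one application, the ${\times}0,{=}1,{=}{-}1$ coordinates are pinned to constants, while the ${+}0$ coordinates hold the value inherited from $\bv c_{n+w-1}$, and the ${\pm}1$ coordinates equal $[\bv c_{n+w-1}]_i\pm(m-w+1)$. Since $\bv c_{n+w-1}$ depends only on $n$ and $w$ (not on $m$), the whole vector $\bv c_{n+m}$ is an affine function of $m$. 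The gate $\bv o_{n+m}$ also depends only on the window $b^w$, so $\bv o_{n+m}$ is constant in $m$, and consequently $\bv h_{n+m}=\bv o_{n+m}\odot\bv c_{n+m}$ is affine in $m$. Composing with the linear decoder, $\varphi(m)\triangleq \bv w\cdot \bv h_{n+m}+b$ is an affine function $\alpha m+\beta$ of the integer variable $m\geq w$.

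Second, I would derive a contradiction from the acceptance condition. The pair decides $a^nb^n$ iff $\varphi(m)>0$ precisely when $m=n$ (restricted to $m\geq w$, which holds since $n>w$). If $\alpha=0$, then $\varphi$ is constant, giving either acceptance at every $m\geq w$ (so $a^nb^{n+1}$ is wrongly accepted) or rejection at every such $m$ (so $a^nb^n$ is wrongly rejected). If $\alpha\neq 0$, then $\{m\geq w:\varphi(m)>0\}$ is an infinite half-line of integers, which cannot coincide with the singleton $\{n\}$. Either way we contradict the assumption that the encoder-decoder pair recognizes $a^nb^n$.

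I expect the only delicate step to be justifying rigorously that the post-boundary counter behavior is exactly affine in $m$; this requires being careful about the one-shot nature of the $\times 0$ and $={\pm}1$ operations and about the fact that $\bv c_{n+w-1}$ is already fully determined before the uniform $b^w$ regime begins. Once that is in place, the rest reduces to the elementary observation that an affine function over $\mathbb{Z}_{\geq w}$ cannot indicate a single point, which closes the argument.
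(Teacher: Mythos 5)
Your proposal is correct and follows essentially the same route as the paper's proof: both arguments observe that once the trailing window stabilizes to all $b$'s, every step applies the identical per-coordinate update, so the final counter values (and hence the decoder's pre-activation) are an affine function of the number of trailing $b$'s, which cannot be positive at exactly one point. The paper instantiates this with the three values $m=n-1,n,n+1$ to get an explicit contradiction, while you phrase it as the general fact that an affine function on $\mathbb{Z}_{\geq w}$ has a half-line or trivial positivity set; this is a cosmetic difference, and your explicit handling of the ${\times}0$ and ${=}{\pm}1$ coordinates and of the output gate is, if anything, slightly more careful than the original.
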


\begin{proof}
An \textit{ifo} s-QRNN can be expressed as a $\Sigma^k$-restricted CM with the additional update operations $\{\coloneqq -1, \coloneqq 1\}$, where $k$ is the window size of the QRNN. 
So it is sufficient to show that such a machine, when coupled with the decoder $\MLPT{1}$ (linear translation followed by thresholding), cannot recognize $\anbn$.

Let $\mathcal{A}$ be some such CM, with window size $k$ and $h$ counters. Take $n=k+10$ and for every $m\in\mathbb{N}$ denote $w_m=a^nb^m$ and the counter values of $\mathcal{A}$ after $w_m$ as  $c^m\in\mathbb{Q}^h$.
Denote by $u_t$ the vector of counter update operations made by this machine on input sequence $w_m$ at time $t\leq n+m$. 
As $\mathcal{A}$ is dependent only on the last $k$ counters, necessarily all $u_{k+i}$ are identical for every $i\geq 1$.

It follows that for all counters in the machine that go through an assignment (i.e., $\coloneqq$) operation in $u_{k+1}$, their values in $c^{k+i}$ are identical for every $i\geq 1$, and for every other counter $j$, $c^{k+i}_j-c^k_j=i\cdot \delta$ for some $\delta\in\mathbb{Z}$.
Formally: for every $i\geq 1$ there are two sets $I$, $J=[h]\setminus I$ and constant vectors $\bv u\in\mathbb{N}^{I}, \bv v\in\mathbb{N}^{J}$ s.t. $c^{k+i}|_{I}=\bv u$ and $[c^{k+i}-c^k]|_{J}=\bv i\cdot v$.

We now consider the linear thresholder, defined by weights and bias $\bv w, b$. In order to recognise $a^nb^n$, the thresholder must satisfy:
\begin{align}
    \bv w \cdot c^{k+9} + &b &&<0 \label{eq:inj} \\
    \bv w \cdot c^{k+10} + &b &&>0 \label{eq:jni}\\
    \bv w \cdot c^{k+11} + &b && <0\label{eq:n}
\end{align}

Opening these equations gives:

\begin{align}
	\bv w|_J(\cdot c^{k}|_J+& 9 \bv v|_J) &&+ \bv w|_I\cdot \bv u &&&< 0\\
		\bv w|_J(\cdot c^{k}|_J+& 10 \bv v|_J) &&+ \bv w|_I\cdot \bv u &&&> 0\\
	\bv w|_J(\cdot c^{k}|_J+& 11 \bv v|_J) &&+ \bv w|_I\cdot \bv u &&&< 0
\end{align}
\noindent but this gives
$9 w|_J{\cdot} \bv v|_J < 10  w|_J{\cdot} \bv v|_J > 11 w|_J{\cdot} \bv v|_J$, which is impossible.

\end{proof}

However, this does not mean that the s-QRNN is entirely incapable of recognising $a^nb^n$. Increasing the decoder power allows it to recognise $a^nb^n$ quite simply:

\begin{theorem} \label{thm:qrnn2-anbn}
For the two-layer decoder $\MLPT{2}$, $\anbn \in\MLPT{2}$(\clean{s-QRNN}).
\end{theorem}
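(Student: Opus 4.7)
The plan is to realise the informal construction hinted at in \autoref{sec:anbn-summary}: build an s-QRNN whose final state encodes exactly the two features of $x$ that jointly determine membership in $\anbn$, then use the two-layer decoder to combine them. The observation driving the construction is that $x \in \anbn$ iff $x \in a^*b^*$ and $\num{a}{x} = \num{b}{x}$; equivalently, iff $c_1 \triangleq \num{a-b}{x} = 0$ and $c_2 \triangleq \mathbbm{1}[\,x \text{ contains the bigram } ba\,] = 0$.

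For the encoder, I would invoke the characterisation already established earlier in the paper that an s-QRNN with window $w$ is a $(\Sigma^w)$-restricted CM whose per-counter updates range over $\{+1, -1, +0, \times 0, =1, =-1\}$. Take $w = 2$ and two counters. Counter $c_1$ updates by $+1$ on input $a$ and $-1$ on input $b$, ignoring the other window symbol, so after reading $x$ it holds $\num{a-b}{x}$. Counter $c_2$ executes $=1$ exactly on the bigram $ba$ and $+0$ on every other bigram; starting from $0$ and never leaving $\{0, 1\}$, it becomes $1$ as soon as $x$ contains $ba$ and remains $1$ thereafter. Set the output gate to the identity so the final hidden vector equals $(c_1, c_2)$. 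Writing down explicit saturated convolutional weights $\bv W^z, \bv W^f, \bv W^i, \bv W^o$ realising each of these updates is routine given the cited characterisation.

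All that remains is the decoder. The subtlety---which is exactly the one blocking $\MLPT{1}$ in \autoref{thm:anbn-qrnn}---is that $\{c_1 = 0\}$ is not linearly separable as a subset of $\mathbb{Z}$, but with two layers we can decompose the equality into two half-space tests. Give the hidden layer three units $n_1 = \mathbbm{1}_{>0}(c_1 + \tfrac{1}{2})$, $n_2 = \mathbbm{1}_{>0}(-c_1 + \tfrac{1}{2})$, $n_3 = \mathbbm{1}_{>0}(-c_2 + \tfrac{1}{2})$, which fire iff $c_1 \geq 0$, $c_1 \leq 0$, and $c_2 = 0$ respectively, and let the output unit be $\mathbbm{1}_{>0}(n_1 + n_2 + n_3 - \tfrac{5}{2})$. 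This fires iff all three hidden units fire, i.e., iff $c_1 = 0$ and $c_2 = 0$, which is exactly $x \in \anbn$. I do not expect any real obstacle here; the only substantive points are (i) checking that the two counter updates lie in the s-QRNN update repertoire fixed by the paper, and (ii) verifying that this two-layer threshold circuit implements the conjunction $\mathbbm{1}[c_1 = 0 \;\land\; c_2 = 0]$, both of which are mechanical.
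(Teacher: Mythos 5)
Your construction is correct and is essentially the paper's own proof: a window-$2$ s-QRNN maintaining $\num{a-b}{x}$ in one counter and a $ba$-bigram detector in the other (the paper counts $ba$ occurrences with ${+}1$ rather than latching with ${=}1$, an immaterial difference), followed by a two-layer decoder that decomposes the test $c_1=0 \wedge c_2=0$ into linearly separable half-space checks. No gap; the explicit threshold-unit weights you give just spell out the conjunction the paper states abstractly.
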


\begin{proof}
Let $\num{ba}{x}$ denote the number of $ba$ $2$-grams in $x$. We use s-QRNN with window size $2$ to maintain two counters:
\begin{align}
    [\bv c_t]_1 &= \num{a-b}{x} \\
    [\bv c_t]_2 &= \num{ba}{x} .
\end{align}

\noindent $[\bv c_t]_2$ can be computed provided the QRNN window size is $\geq 2$. A two-layer decoder can then check
\begin{equation}
    0 \leq [\bv c_t]_1 \leq 0 \wedge [\bv c_t]_2 \leq 0 .
\end{equation}
\end{proof}

\begin{theorem}[Suffix attack] \label{thm:prefanbn-qrnn}
No s-QRNN and decoder can recognize the language $\contanbn=a^nb^n(a|b)^*$, $n>0$, i.e., $\contanbn\notin L ($s-QRNN$)$ for any decoder $L$.
\end{theorem}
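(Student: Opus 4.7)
The plan is to exploit the $\Sigma^w$-restricted counter-machine structure of the s-QRNN established in the proof of \autoref{thm:CM-level2}: with window length $w$, every per-step counter update is a fixed function of the $w$-gram ending at the current position, drawn from $\{-1,+0,+1,\times 0,\coloneqq 1,\coloneqq -1\}$. The goal is to produce, for any given s-QRNN, two strings $s_1 t \in \contanbn$ and $s_2 t \notin \contanbn$ on which the s-QRNN reaches an identical final hidden state; once that is done, no decoder $D$ can distinguish them, and $\contanbn \notin \MLPT{}(\clean{s-QRNN})$ follows.

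I would first partition the s-QRNN's counters into \emph{purely additive} ones, whose update function takes values only in $\{-1,+0,+1\}$, and \emph{resetting} ones, which take at least one value in $\{\times 0,\coloneqq 1,\coloneqq -1\}$. A purely additive counter's final value is a fixed linear function of the multiset of $w$-grams of the input and is otherwise order-insensitive, while a resetting counter forgets its history the moment one of its reset-triggering $w$-grams fires, so its value thereafter depends only on what is consumed after that last trigger. Consequently, if a common suffix $t$ appended to two strings $s_1,s_2$ both (i)~contains every $w$-gram in $\Sigma^w$, so that every resetting counter is in fact reset inside $t$, and (ii)~makes the multisets of $w$-grams of $s_1 t$ and $s_2 t$ induce equal values in every purely additive counter, then the s-QRNN reaches the same final state on $s_1 t$ and $s_2 t$.

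For the explicit construction, I would take $s_1 = a^N b^N$ with $N \gg w$, which lies in $\contanbn$, and obtain $s_2$ from $s_1$ by a small rearrangement near the start that forces its first symbol to be $b$; then $s_2 t \notin \contanbn$ for every $t$, while $s_1 t \in \contanbn$ automatically. I would choose $t$ to contain every $w$-gram in $\Sigma^w$ (for instance, a de~Bruijn sequence of order $w$) followed by a tunable tail of $a^w$- and $b^w$-blocks: those trailing blocks contribute only to purely additive counters (all resets have already fired earlier inside $t$) and give enough free parameters to equalize any residual additive imbalance between $s_1 t$ and $s_2 t$. With both the resetting and the purely additive counters agreeing, the two s-QRNN runs end in the same hidden configuration, and no decoder can separate them.

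The combinatorial heart of the argument, and the step I expect to be the main obstacle, is condition~(ii). The $w$-gram multiset of a string pins down the first and last symbols via in/out-degree balance in its associated de~Bruijn graph, so forcing $s_2$ to start with $b$ rather than $a$ necessarily introduces a bounded, structured mismatch between the $w$-gram multisets of $s_1$ and $s_2$. The argument must verify that this discrepancy can always be cancelled by adjusting the $a^w$- and $b^w$-block counts in the tail of $t$, together with, if needed, a further local modification inside $s_2$ that preserves its leading $b$; this amounts to showing that a specific integer difference vector lies in the image of the free tuning parameters against each purely additive counter's update function. Once that bookkeeping closes, the equal final states together with $s_1 t \in \contanbn$ and $s_2 t \notin \contanbn$ deliver the theorem.
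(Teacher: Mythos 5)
Your high-level strategy is the same as the paper's: view the s-QRNN as a $\Sigma^w$-restricted CM with the extra assignment operations, split the counters into purely additive ones (order-insensitive, determined by the multiset of windows) and resetting ones (resynchronized by a suffix that triggers every reset), and exhibit $s_1\in\contanbn$, $s_2\notin\contanbn$ reaching the same final counter values. However, your concrete construction has a genuine gap, in two places. First, the ``tunable tail'' of $t$ cannot do the job you assign to it: $t$ is appended to \emph{both} strings, so every window it contributes is added to the multisets of $s_1t$ and $s_2t$ equally, and the difference of any purely additive counter on $s_1t$ versus $s_2t$ is unchanged no matter how the tail is chosen. The additive imbalance must therefore be cancelled entirely by matching the window multisets of $s_1$ and $s_2$ themselves (including the partial windows at the left edge and the windows straddling the boundary with $t$); the tail provides no free parameters. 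Second, and fatally, forcing $s_2$ to begin with $b$ makes such matching impossible: the update at time $1$ is conditioned on the initial partial window (a state in $\Sigma^{<w}$), so an adversarial s-QRNN may dedicate a purely additive counter that adds $1$ exactly when that window is $b$. This counter equals $1$ on every string starting with $b$ and $0$ on every string starting with $a$, so the final states of $s_1t$ and $s_2t$ differ for your pair regardless of any further local modification that preserves the leading $b$, and a one-layer decoder separates them. Your own de~Bruijn degree observation is the symptom of this obstruction, but it cannot be cancelled---it has to be avoided.

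The paper avoids both problems by choosing two strings with \emph{identical} window multisets whose membership differs for block-structural reasons rather than because of the first letter: $w_1=a^jb^ja^jb^ja^jb^j\in\contanbn$ and $w_2=a^jb^{j-1}a^jb^{j+1}a^jb^j\notin\contanbn$ with $j$ larger than the window size, where $w_2$ is obtained by moving a single $b$ from the end of the first $b$-block to the end of the second. Because every block has length at least the window size, this move changes no window anywhere, so the two strings induce exactly the same multiset of counter updates in a different order; the shared suffix $b^ja^jb^j$ already contains every window type occurring in either string, which resynchronizes the resetting counters, and the additive counters agree because they are order-insensitive. To salvage your argument you would need to replace ``$s_2$ starts with $b$'' with a window-preserving permutation of $s_1$ of this kind.
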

The proof will rely on the s-QRNN's inability to ``freeze'' a computed value, protecting it from manipulation by future input.

\begin{proof}

As in the proof for \autoref{thm:anbn-qrnn}, it is sufficient to show that no $\Sigma^k$-restricted CM with the additional operations $\{{\coloneqq}{-}1,{\coloneqq}1 \}$ can recognize $\contanbn$ for any decoder $L$.

Let $\mathcal{A}$ be some such CM, with window size $k$ and $h$ counters. For every $w\in\Sigma^n$ denote by $c(w)\in\mathbb{Q}^h$ the counter values of $\mathcal{A}$ after processing $w$.
Denote by $u_t$ the vector of counter update operations made by this machine on an input sequence $w$ at time $t\leq |w|$. 
Recall that $\mathcal{A}$ is $\Sigma^k$ restricted, meaning that $u_i$ depends exactly on the window of the last $k$ tokens for every $i$.

We now denote $j=k+10$ and consider the sequences $w_1=a^jb^ja^jb^ja^jb^j$, $w_2=a^jb^{j-1}a^jb^{j+1}a^jb^j$. $w_2$ is obtained from $w_1$ by removing the $2j$-th token of $w_1$ and reinserting it at position $4j$.

As all of $w_1$ is composed of blocks of $\geq k$ identical tokens, the windows preceding all of the other tokens in $w_1$ are unaffected by the removal of the $2j$-th token. 
Similarly, being added onto the end of a substring $b^k$, its insertion does not affect the windows of the tokens after it, nor is its own window different from before. 
This means that overall, the set of all operations $u_i$ performed on the counters is identical in $w_1$ and in $w_2$. 
The only difference is in their ordering.

$w_1$ and $w_2$ begin with a shared prefix $a^k$, and so necessarily the counters are identical after processing it. We now consider the updates to the counters after these first $k$ tokens, these are determined by the windows of $k$ tokens preceding each update.

First, consider all the counters that undergo some assignment ($\coloneqq$) operation during these sequences, and denote by $\{w\}$ the multiset of windows in $w\in\Sigma^k$ for which they are reset. $w_1$ and $w_2$ only contain $k$-windows of types $a^xb^{k-x}$ or $b^xa^{k-x}$, and so these must all re-appear in the shared suffix $b^ja^jb^j$ of $w_1$ and $w_2$, at which point they will be synchronised. It follows that these counters all finish with identical value in $c(w_1)$ and $c(w_2)$.

All the other counters are only updated using addition of $-1,1$ and $0$, and so the order of the updates is inconsequential. It follows that they too are identical in $c(w_1)$ and $c(w_2)$, and therefore necessarily that $c(w_1)=c(w_2)$.

From this we have $w_1,w_2$ satisfying $w_1\in\contanbn,w_2\notin\contanbn$ but also $c(w_1)=c(w_2)$. Therefore, it is not possible to distinguish between $w_1$ and $w_2$ with the help of any decoder, despite the fact that $w_1\in \contanbn$ and $w_2\notin\contanbn$. It follows that the CM and s-QRNN cannot recognize $\contanbn$ with any decoder.
\end{proof}

For the opposite extension $\Sigma^* \anbn$, in which the language is augmented by a \emph{prefix}, we cannot use such a ``suffix attack''. In fact, $\Sigma^* \anbn$ can be recognized by an s-QRNN with window length $w\geq 2$ and a linear threshold decoder as follows: a counter counts $\#_{a-b}(x)$ and is reset to $1$ on appearances of $ba$, and the decoder compares it to $0$.

Note that we define decoders as functions from the final state to the output. Thus, adding an additional QRNN layer does not count as a ``decoder" (as it reads multiple states). In fact, we show that having two QRNN layers allows recognizing $\contanbn$.

\begin{theorem}
Let $\epsilon$ be the empty string. Then,
\begin{equation*}
    \contanbn \cup \{\epsilon\} \in \MLPT{1}(\clean{s-QRNN} \circ \clean{s-QRNN}) .
\end{equation*}
\end{theorem}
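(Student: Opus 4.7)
The plan is to build an explicit two-layer s-QRNN with a single linear-threshold decoder. The central difficulty, highlighted by the suffix-attack obstruction for the one-layer case, is that any ``the prefix $a^n b^n$ has been matched'' test is a conjunction that includes the equality $\#_a(x_{1:t}) = \#_b(x_{1:t})$, and equality over unbounded integers cannot be realised as a single saturated linear/tanh threshold. I would circumvent this by introducing two signed net-counters whose sum algebraically cancels everything except the desired equality events.

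The first layer (window $2$) would compute, in parallel $\mathit{ifo}$-QRNN units, the five scalar features $c_1(t) = \#_a(x_{1:t})$, $c_2(t) = \#_b(x_{1:t})$, $c_3(t) = \mathbbm{1}[\text{the bigram } ba \text{ has occurred in } x_{1:t}]$ (sticky, $\coloneqq 1$ on $ba$ and keep otherwise), $c_4(t) = \mathbbm{1}[\text{some } a \text{ has occurred}]$ (sticky, $\coloneqq 1$ on $a$), and $c_5(t) = \mathbbm{1}_{=b}(x_t)$ (a constant-$1$ counter modulated by an output gate firing on $b$). Each is a standard s-QRNN construction of the kind already used in the paper's $\anbn$ analysis.

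The second layer (window $1$) reads $h^{(1)}_t = (c_1, c_2, c_3, c_4, c_5)$ and maintains three counters. Since $c_3, c_4, c_5 \in \{0,1\}$, the bounded subcondition $C(t) \coloneqq [c_3 = 0 \wedge c_4 = 1 \wedge c_5 = 1]$ is the single linear threshold $-c_3 + c_4 + c_5 \geq 2$. Define the signed counters $d_+$ and $d_-$ by $f_t = 1$, $i_t = \mathbbm{1}[C(t)]$, and $z$-gates saturated to $+1$ iff $c_1 \geq c_2$ (for $d_+$) or $c_1 \leq c_2$ (for $d_-$) and to $-1$ otherwise; and a sticky $d_0$ that is $\coloneqq 1$ on every step (so $d_0(n) = 0$ iff $x = \epsilon$). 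A direct expansion using $\mathbbm{1}[c_1 \geq c_2] + \mathbbm{1}[c_1 \leq c_2] - \mathbbm{1}[c_1 < c_2] - \mathbbm{1}[c_1 > c_2] = 2\,\mathbbm{1}[c_1 = c_2]$ then gives
\[
d_+(n) + d_-(n) \;=\; 2\,\bigl|\{t : C(t) \wedge c_1(t) = c_2(t)\}\bigr|.
\]

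A short combinatorial check closes the argument: $C(t)$ forces $x_{1:t} \in a^*b^*$ with at least one $a$ and $x_t = b$, so $x_{1:t} = a^p b^q$ with $p, q \geq 1$, and adjoining $c_1(t) = c_2(t)$ pins $p = q$, so $x$ starts with $a^p b^p$ and lies in $\contanbn$; conversely every $x \in \contanbn$ realises exactly one such $t = 2p$, since any later $t$ would either unbalance the counts or have already triggered $c_3 = 1$. Hence $d_+(n) + d_-(n) \in \{0, 2\}$ with value $2$ iff $x \in \contanbn$, and the linear decoder $\mathbbm{1}[d_+ + d_- - 2 d_0 + 2 > 0]$ accepts $\epsilon$ (via $d_0 = 0$) and $\contanbn$ (via $d_+ + d_- = 2$) while rejecting every other string. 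The main obstacle I foresee is the gate-level verification that each of $f$, $i$, $z$, $o$ in both layers really is a single saturated linear/tanh threshold with constant weights acting on possibly-unbounded first-layer features; the essential new move compared to the one-layer case is to split the equality $c_1 = c_2$ across the $z$-gates of \emph{two distinct counters} and let the decoder's linear combination perform the conjunction by cancellation.
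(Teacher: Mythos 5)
Your construction is correct, and it reaches the result by a genuinely different route than the paper. The paper's second layer keeps a \emph{single} counter that accumulates $+1$ on $a$ and $-1$ on $b$ but freezes (update ${+}0$) as soon as the first-layer feature $\num{ba}{x}$ becomes positive, so the final value is $p-q$ for the initial $a^pb^q$ block; the decoder then tests the \emph{inequality} $c_t\leq 0$, with a separate first-layer flag forcing rejection of strings that begin with $b$. You instead detect the \emph{equality} event $\num{a}{x_{1:t}}=\num{b}{x_{1:t}}$ at the unique candidate time $t=2p$, splitting the non-linearly-separable test $c_1=c_2$ into $\geq$ and $\leq$ across the $z$-gates of two counters and letting the decoder's linear sum perform the conjunction via the identity $\mathbbm{1}[{\geq}]+\mathbbm{1}[{\leq}]-\mathbbm{1}[{<}]-\mathbbm{1}[{>}]=2\cdot\mathbbm{1}[{=}]$; your combinatorial argument that the gating condition $C(t)$ isolates at most one such $t$ is sound, and your explicit $d_0$ flag handles $\epsilon$ (the paper gets $\epsilon$ for free since its counter is $0\leq 0$). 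The gate-level feasibility you flag as the main obstacle is not actually a problem: every gate is a saturated $\sigma$ or $\tanh$ of a linear form in integer-valued first-layer features with a half-integer offset, which is exactly the regime the paper's own proofs of \autoref{thm:qrnn2-anbn} and \autoref{thm:anbn-lstm} rely on, and the paper's second layer likewise thresholds the unbounded quantity $\num{ba}{x}$. What the paper's version buys is economy (two first-layer features, one second-layer counter); what yours buys is a reusable gadget showing that an equality test over unbounded counters---precisely the obstruction behind $\anbn\notin\MLPT{1}(\clean{s-QRNN})$---can be simulated by a $\MLPT{1}$ decoder reading two second-layer counters, which would transfer to other languages defined by balance events.
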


\begin{proof}
We construct a two-layer s-QRNN from which $\contanbn$ can be recognized. Let $\$$ denote the left edge of the string. The first layer computes two quantities $d_t$ and $e_t$ as follows:
\begin{align}
    d_t &= \num{ba}{x} \\
    e_t &= \num{\$b}{x} .
\end{align}
Note that $e_t$ can be interpreted as a binary value checking whether the first token was $b$.
The second layer computes $c_t$ as a function of $d_t, e_t,$ and $x_t$ (which can be passed through the first layer). We will demonstrate a construction for $c_t$ by creating linearly separable functions for the gate terms $f_t$ and $z_t$ that update $c_t$.
\begin{align}
    f_t &= \begin{cases}
        1 & \textrm{if} \; d_t \leq 0 \\
        0 & \textrm{otherwise}
    \end{cases} \\
    z_t &= \begin{cases}
        1 & \textrm{if} \; x_t = a \vee e_t \\
        -1 & \textrm{otherwise.}
    \end{cases}
\end{align}
Now, the update function $u_t$ to $c_t$ can be expressed
\begin{equation}
    u_t = f_tz_t = \begin{cases}
        {+}0 & \textrm{if} \; 0 < d_t \\
        {+}1 & \textrm{if} \; d_t \leq 0 \wedge ( x_t = a \vee e_t ) \\
        {-}1 & \textrm{otherwise.}
    \end{cases}
\end{equation}
Finally, the decoder accepts iff $c_t \leq 0$. To justify this, we consider two cases: either $x$ starts with $b$ or $a$. If $x$ starts with $b$, then $e_t = 0$, so we increment $c_t$ by $1$ and never decrement it. Since $0 < c_t$ for any $t$, we will reject $x$. If $x$ starts with $a$, then we accept iff there exists a sequence of $b$s following the prefix of $a$s such that both sequences have the same length.
\end{proof}

\section{s-LSTMs}\label{sec:lstm-anbn}

In contrast to the s-QRNN, we show that the s-LSTM paired with a simple linear and thresholding decoder can recognize both $\anbn$ and $\contanbn$.

\begin{theorem} \label{thm:anbn-lstm}
\begin{equation*}
    \anbn \in \MLPT{1}(\clean{s-LSTM}) .
\end{equation*}
\end{theorem}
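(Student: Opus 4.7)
The plan is to construct a saturated LSTM with a small fixed number of memory cells together with a single linear-threshold decoder that accepts exactly $\anbn$. The key observation is that the LSTM gadget used in the proof that the s-LSTM is not RR already realizes, on any input from $a^ib^j$, the rectified counter $c_1 = \max(0, i-j)$. I will reuse that cell, and add three more cells that jointly distinguish $\anbn$ from every other string in $\{a,b\}^*$.

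First, I will build a symmetric cell $c_2$ whose input gate is $1$ on $b$ precisely when $h_{1,t-1} = 0$ (i.e.\ when $c_{1,t-1} = 0$), and $0$ otherwise; and whose candidate $\tilde c_2$ is $+1$ on $b$ and $-1$ on $a$. With forget gate $1$ and the same large-$N$ scaling as in the earlier proof, this yields $c_2 = \max(0, j-i)$ on inputs $a^ib^j$, so on $a^*b^*$ inputs one has $c_1 = c_2 = 0$ iff $i = j$. To rule out strings outside $a^*b^*$, I will add an auxiliary cell $c_4$ whose input gate fires on every $b$ with candidate $+1$, so that $c_4$ is positive exactly when some $b$ has appeared, and a cell $c_3$ whose input gate fires on $a$ precisely when $h_{4,t-1} > 0$, with candidate $+1$. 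For any $x$ containing the substring $ba$, $c_3$ is $\ge 1$ at the end; for $x \in a^*b^*$, $c_3 = 0$.

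Because $c_1, c_2, c_3 \ge 0$ throughout the run and in the saturated limit $\tanh$ collapses to $\mathbbm{1}_{>0}$ on nonnegative integers, the corresponding output coordinates lie in $\{0, 1\}$. The linear threshold decoder
\[
    \decfun{1}(\bv h_t) = \mathbbm{1}_{>0}\!\left(-h_{1,t} - h_{2,t} - h_{3,t} + \tfrac{1}{2}\right)
\]
therefore accepts iff $h_{1,t} = h_{2,t} = h_{3,t} = 0$, which by the analysis above happens iff $x \in a^*b^*$ with $\#_a(x) = \#_b(x)$, i.e.\ iff $x \in \anbn$. Each gate is an affine function of $x_t$ and $\bv h_{t-1}$ scaled by a parameter $N$, exactly as in the non-RR construction; sending $N \to \infty$ yields the asserted saturated behaviour.

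The real conceptual obstacle is that a single counter tracking $\num{a-b}{x}$ takes values in $\mathbb{Z}$ whose zero-set is not linearly separable from the rest when read through $\tanh$, which at first glance seems to require a two-layer decoder. The trick is to split the count into two \emph{nonnegative} rectified counters whose simultaneous vanishing characterizes $i=j$, and to pair them with a separate $a^*b^*$ well-formedness detector so that a single linear threshold suffices. Once the gadget is set up, the verification amounts to routine bookkeeping of the gate values in the $N \to \infty$ limit, entirely parallel to the argument used for the non-RR theorem.
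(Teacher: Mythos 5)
Your proposal is correct and takes essentially the same route as the paper's proof: two rectified counters computing $\relu(i-j)$ and $\relu(j-i)$, a third unit detecting the $2$-gram $ba$, and a single linear threshold that accepts iff all three saturated outputs vanish. The only difference is that you make the $ba$-detector explicit via an auxiliary cell $c_4$ feeding the input gate of $c_3$, an implementation detail the paper leaves implicit.
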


\begin{proof}
Assuming a string $a^ib^i$, we set two units of the LSTM state to compute the following functions using the CM in \autoref{fig:slstm-cm}:
\begin{align}
    [\bv c_t]_1 &= \relu(i-j) \\
    [\bv c_t]_2 &= \relu(j-i) .
\end{align}
We also add a third unit $[\bv c_t]_3$ that tracks whether the $2$-gram $ba$ has been encountered, which is equivalent to verifying that the string has the form $a^ib^i$. Allowing $\bv h_t = \tanh(\bv c_t)$, we set the linear threshold layer to check
\begin{equation}
    [\bv h_t]_1 + [\bv h_t]_2 + [\bv h_t]_3 \leq 0 .
\end{equation}
\end{proof}

\begin{theorem} \label{thm:prefanbn-lstm}
\begin{equation*}
    \contanbn \in \MLPT{1}(\clean{s-LSTM}) .
\end{equation*}
\end{theorem}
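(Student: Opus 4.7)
The plan is to construct an s-LSTM that maintains a monotone ``accept'' flag $A_t$ which latches to $1$ the moment any prefix of $x$ of the form $a^n b^n$ with $n \geq 1$ has been completed, and to decide $\contanbn$ with a single linear threshold over a handful of its saturated outputs. Counter-awareness, which separates the s-LSTM from rational architectures (\autoref{sec:beyond-rr}), is exactly what lets us realize $A_t$.

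I will use four LSTM cells, each realized by setting its gates to implement a counter-machine transition of the sort used in \autoref{fig:slstm-cm}: (i) a balance $B_t$ tracking $\num{a}{x_{1..t}} - \num{b}{x_{1..t}}$ while $x_{1..t}$ stays in the $a^*b^*$ pattern, frozen on any violation and never taken below $0$; (ii) a seen-$b$ flag $S_t$; (iii) a validity flag $P_t$ that drops from $1$ to $0$ on a $ba$ substring, a leading $b$, or an extra $b$ while $B_{t-1}=0$; and (iv) the accept flag $A_t$, whose input gate fires precisely when $h_{t-1}$ simultaneously reports $B=0,\ S=1,\ P=1,\ A=0$. Because each of these is a saturated $\{0,1\}$ value, this conjunction is a linear threshold of $h_{t-1}$ (namely $h^S + h^P - h^B - h^A > 1.5$), so it is realizable by a single sigmoid gate; once $c^A$ has been incremented, $h^A = 1$ kills the input gate and $A_t$ stays pinned.

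The decoder is $\decfun{1}(\bv h_T) = \mathbbm{1}_{>0}\bigl(h^A_T + h^P_T + h^S_T - h^B_T - 1.5\bigr)$. Correctness reduces to a short case check. If a match was completed at some step $< T$ then $h^A_T = 1$ and the freezing invariants keep $h^B_T = 0,\ h^S_T = 1$ (while $h^P_T$ may have dropped to $0$), giving a score of at least $1 + 0 + 1 - 0 - 1.5 = 0.5 > 0$. If the match is completed exactly at step $T$, then $h^A_T = 0$ but $h^B_T = 0,\ h^P_T = h^S_T = 1$, again giving $0.5 > 0$. In every non-accepting configuration---empty string, running balance $B > 0$, leading $b$, or $ba$-invalidation without a prior match---the sum $h^A + h^P + h^S - h^B$ is at most $1$, so the score is at most $-0.5 < 0$.

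The main subtlety I expect is in specifying the gates for $B_t$ and $P_t$ so that at the match step $A$'s input gate reads the \emph{pre-update} values $B_{t-1}, P_{t-1}, S_{t-1}$ while $B$ and $P$ themselves freeze using the pre-update $A_{t-1}$. Since all LSTM gates consume only $h_{t-1}$ this happens automatically, but I still need to verify the invariant ``whenever $A_T = 1$, the frozen value $B_T = 0$'' by a small case check over the possible transitions at and just after the match step. Once the invariant is checked, the case analysis above closes the proof.
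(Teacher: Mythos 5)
Your construction is essentially the paper's own proof: both maintain a few saturated LSTM units that detect the completion of an $a^nb^n$ prefix (a clamped balance counter plus validity/seen-$b$ flags in your version, the two one-sided counters $\relu(i-j),\relu(j-i)$ plus a $ba$-flag in the paper's), latch that event into a dedicated unit, and decide with a single linear threshold over the final hidden state that realizes the disjunction ``latched earlier or accepting now.'' The remaining checks you flag (gate realizability and the invariant that $B$ stays frozen at $0$ once $A$ latches) go through, and your treatment of the $n>0$ and latch-timing edge cases is if anything more explicit than the paper's.
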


\begin{proof}
We use the same construction as \autoref{thm:anbn-lstm}, augmenting it with
\begin{equation}
    [\bv c_t]_4 \triangleq [\bv h_{t-1}]_1 + [\bv h_{t-1}]_2 + [\bv h_{t-1}]_3 \leq 0 .
\end{equation}
We decide $x$ according to the (still linearly separable) equation
\begin{equation}
    \big( 0 < [\bv h_t]_4 \big) \vee \big( [\bv h_t]_1 + [\bv h_t]_2 + [\bv h_t]_3 \leq 0 \big) .
\end{equation}
\end{proof}
\section{Experimental Details} \label{sec:exp-details}

Models were trained on strings up to length $64$, and, at each index $t$, were asked to classify whether or not the prefix up to $t$ was a valid string in the language. Models were then tested on independent datasets of lengths $64, 128, 256, 512, 1024,$ and $2048$. The training dataset contained $100000$ strings, and the validation and test datasets contained $10000$. We discuss task-specific schemes for sampling strings in the next paragraph. All models were trained for a maximum of $100$ epochs, with early stopping after $10$ epochs based on the validation cross entropy loss. We used default hyperparameters provided by the open-source AllenNLP framework \citep{Gardner2017AllenNLP}. The code is available at \url{https://github.com/viking-sudo-rm/rr-experiments}.

\paragraph{Sampling strings} For the language $L_5$, each token was sampled uniformly at random from $\Sigma = \{a, b\}$. For $\contanbn$, half the strings were sampled in this way, and for the other half, we sampled $n$ uniformly between $0$ and $32$, fixing the first $2n$ characters of the string to $a^nb^n$ and sampling the suffix uniformly at random.

\paragraph{Experimental cost} The originally reported experiments were run for 20 GPU hours on Quadro RTX 8000.
\section{Self Attention} \label{sec:self-attention}

\paragraph{Architecture} We place saturated self attention \citep{vaswani2017attention} into the state expressiveness hierarchy. We consider a single-head self attention encoder that is computed as follows:
\begin{enumerate}
    \item At time $t$, compute queries $\bv q_t$, keys $\bv k_t$, and values $\bv v_t$ from the input embedding $\bv x_t$ using a linear transformation.
    \item Compute attention head $\bv h_t$ by attending over the keys and values up to time $t$ ($\bv K_{:t}$ and $\bv V_{:t}$) with query $\bv q_t$.
    \item Let $\norm{\cdot}_L$ denote a layer normalization operation \citep{ba2016layer}.
        \begin{align}
            \bv h'_t &= \relu \big(\bv W^h \cdot \norm{\bv h_t}_L \big) \label{eq:layer1} \\
            \bv c_t &= \norm{\bv W^c \bv h'_t}_L .
        \end{align}
\end{enumerate}

This simplified architecture has only one attention head, and does not incorporate residual connections. It is also masked (i.e., at time $t$, can only see the prefix $\bv X_{:t}$), which enables direct comparison with unidirectional RNNs. For simplicity, we do not add positional information to the input embeddings.

\begin{theorem} \label{thm:transformer-rr}
Saturated masked self attention is not RR.
\end{theorem}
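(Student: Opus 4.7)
The strategy parallels the s-LSTM argument in Section~\ref{sec:beyond-rr}: I will construct a specific saturated masked self attention encoder that computes a function $f$ which, restricted to inputs of the form $a^ib^j$, agrees with a function Lemma~\ref{thm:triangular-switch-counter} already shows is not a rational series. By Theorem~\ref{thm:rank} this is enough to place $f$ outside $\rat$, so saturated masked self attention is not RR.

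\textbf{Making attention uniform.} First I would parameterize the linear maps producing queries and keys so that $\bv q_t$ and $\bv k_t$ are identical for every token (for instance, all zero). The query--key dot products are then equal across positions, so in the saturation limit the hardmax attention distributes uniformly over the entire prefix $\bv X_{:t}$, and the attention head at time $t$ equals the mean $\bar{\bv v}_t = \tfrac{1}{t}\sum_{i=1}^t \bv v_i$ of the values. Choose the value embedding so that $\bv v_i$ is $(1,-1,1)$ when $x_i=a$ and $(-1,1,1)$ when $x_i=b$; then $\bar{\bv v}_t = (d_t/t, -d_t/t, 1)$ where $d_t = \num{a-b}{x_{1:t}}$.

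\textbf{Extracting the sign.} I would then use the layer-norm $\to$ linear $\to$ $\relu$ $\to$ layer-norm composition in \eqref{eq:layer1} to convert $\bar{\bv v}_t$ into an indicator of $d_t>0$. The inner layer normalization of $(d_t/t, -d_t/t, 1)$ produces a vector whose first two coordinates have magnitude bounded away from zero and sign equal to $\mathrm{sign}(d_t)$, independent of $t$; a one-hidden-layer ReLU network with appropriate bias then produces the scalar $\mathbbm{1}_{d_t>0}$, and the outer layer norm preserves this information up to a positive rescaling. Restricted to inputs $a^ib^j$, the function $f$ so computed equals $\mathbbm{1}_{i>j}$, and the Hankel sub-block with prefixes $\{a^i\}_{i\leq n}$ and suffixes $\{b^j\}_{j\leq n}$ is strictly lower-triangular with $n$ nonzero rows. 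Hence $\mathrm{rank}(H_f)$ is unbounded and $f\not\in\rat$, giving the theorem.

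\textbf{Main obstacle.} The principal subtlety is the degenerate case $d_t = 0$, where the first two coordinates of $\bar{\bv v}_t$ vanish and the ``sign'' is ambiguous; the presence of the constant third coordinate keeps layer-norm's standard deviation strictly positive, but the normalized output has a zero in the first two slots. I would handle this by tuning the ReLU bias so that the ambiguous case maps to $0$, matching the $\mathbbm{1}_{>0}$ convention, and then observing that the triangular Hankel block is unaffected because its diagonal entries $f(a^ib^i)=\mathbbm{1}_{i>i}=0$ coincide with what the construction yields. A secondary care point is justifying the exchange between the saturation limit and layer normalization: because layer-norm is scale-invariant in its input and continuous away from zero-variance vectors, the saturated head's output is determined by the limits of its inputs, which is precisely what the construction relies on.
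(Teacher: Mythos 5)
Your proposal is correct in substance and takes essentially the same route as the paper's proof: constant keys and queries make saturated attention average the value vectors, so the head carries the counts of $a$ and $b$ normalized by $t$, and the post-attention layers then extract an indicator of a counting condition whose Hankel sub-block over $a^*\times b^*$ has unbounded rank (the paper uses $\mathbbm{1}[\num{a}{x}\neq\num{b}{x}]$ where you use $\mathbbm{1}[\num{a-b}{x}>0]$; both work). Two small inaccuracies are worth repairing: first, Lemma~\ref{thm:triangular-switch-counter} concerns functions agreeing with the rectified-counting value $\max(i-j,0)$ on $a^ib^j$, not with $\mathbbm{1}_{i>j}$, so it cannot be invoked directly---but your direct strictly-lower-triangular rank argument already makes this self-contained; second, layer normalization subtracts the mean $1/3$ of $(d_t/t,-d_t/t,1)$, so the individual first two coordinates do \emph{not} retain $\mathrm{sign}(d_t)$---you must instead take their difference (a linear combination available to $\bv W^h$), whose sign is preserved, which is exactly what the paper's construction does. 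Neither issue affects the validity of the overall argument.
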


\begin{proof}
Let $\num{\sigma}{x}$ denote the number of occurences of $\sigma \in \Sigma$ in string $x$. We construct a self attention layer to compute the following function over $\{a, b\}^*$:
\begin{equation}
    f(x) = \begin{cases}
        0 & \textrm{if} \; \num{a}{x} = \num{b}{x} \\
        1 & \textrm{otherwise} .
    \end{cases}
\end{equation}
\noindent Since the Hankel sub-block over $P=a^*,S=b^*$ has infinite rank, $f \not\in \rat$.

Fix $\bv v_t = \bv x_t$. As shown by \citet{merrill-2019-sequential}, saturated attention over a prefix of input vectors $\bv X_{:t}$ reduces to sum of the subsequence for which key-query similarity is maximized, i.e., denoting $I=\{i\in [t]\mid \bv k_i\cdot \bv q_t = m\}$ where $m=\max\{\bv k_i\cdot \bv q_t|i\in [t]\}$:
\begin{equation}
    \bv h_t = \frac{1}{\abs{I}} \sum_{i\in I} \bv x_{t_i} .
\end{equation}
\noindent For all $t$, set the key and query $k_t, q_t = 1$. 
Thus, all the key-query similarities are $1$, and we obtain:
\begin{align}
    \bv h_t &= \frac{1}{t} \sum_{t'=1}^t \bv x_{t'} \\
    &= \frac{1}{t} \; \big( \num{a}{x}, \; \num{b}{x} \big)^\top .
\end{align}
Applying layer norm to this quantity preserves equality of the first and second elements. Thus, we set the layer in \eqref{eq:layer1} to independently check $0 < [\bv h^0_t]_1 - [\bv h^0_t]_2$ and $[\bv h^0_t]_1 - [\bv h^0_t]_2 < 0$ using $\relu$. The final layer $c_t$ sums these two quantities, returning $0$ if neither condition is met, and $1$ otherwise.

Since saturated self attention can represent $f \notin \rat$, it is not RR.
\end{proof}

\paragraph{Space Complexity}  We show that self attention falls into the same space complexity class as the LSTM and QRNN. 
Our method here extends \citet{merrill-2019-sequential}'s analysis of attention. 

\begin{theorem}
Saturated single-layer self attention has $\Theta(\log n)$ space.
\end{theorem}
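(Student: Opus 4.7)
The plan is to establish both an upper bound $T(n) = O(\log n)$ and a matching lower bound $T(n) = \Omega(\log n)$, so that the number of reachable minimal configurations is polynomial in the sequence length.

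For the upper bound, I would exploit that the input comes from a finite alphabet $\Sigma$ embedded into $\mathbb{Q}^{d_x}$, so that each of the queries $\bv q_t$, keys $\bv k_t$, and values $\bv v_t$ takes at most $|\Sigma|$ distinct values. As recalled in the proof of \autoref{thm:transformer-rr}, saturated attention at step $t$ computes $\bv h_t = \frac{1}{|I|}\sum_{i \in I} \bv v_i$ where $I = \{i \le t : \bv k_i \cdot \bv q_t = \max_{j \le t} \bv k_j \cdot \bv q_t\}$. Because keys are drawn from a fixed finite set, this argmax set has the form $\{i \le t : x_i \in \Sigma_{\bv q_t}\}$ for some subset $\Sigma_{\bv q_t} \subseteq \Sigma$ determined entirely by the query (and hence by $x_t$). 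Therefore $\bv h_t$ is a deterministic function of $x_t$ together with the count vector $\bv n(x) = (\num{\sigma}{x})_{\sigma \in \Sigma}$. The subsequent layer-norm and MLP layers producing $\bv h'_t$ and $\bv c_t$ are deterministic and may only merge states, so the minimal state $s(x)$ is a function of the tuple $(x_t, \bv n(x))$. Since there are at most $|\Sigma| \cdot (n+1)^{|\Sigma|} = O(n^{|\Sigma|})$ such tuples for $x \in \Sigma^{\le n}$, we obtain $T(n) \le |\Sigma|\log(n+1) + O(1) = O(\log n)$.

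For the lower bound, it suffices to exhibit a parameter setting that realizes $\Omega(n)$ distinct states on inputs of length at most $n$. Taking $\Sigma = \{a, b\}$, I would set all keys and queries to a common constant so that the attention distribution is uniform over the prefix, choose linearly independent value vectors $\bv v_a, \bv v_b$, and pick $\bv W^h$ and $\bv W^c$ so that the composite map $(n_a, n_b) \mapsto \bv c_t$ is injective on a large family of count pairs. Restricting to the strings $\{a^k b^{n-k} : 0 \le k \le n\}$ already produces $n+1$ distinct count pairs and hence $n+1$ distinct $\bv c_t$, giving $T(n) \ge \log(n+1) = \Omega(\log n)$.

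The main obstacle is the interaction with layer normalization on the lower-bound side: since layer norm is invariant under positive scaling and additive shifts along the mean direction, two distinct averages $(n_a \bv v_a + n_b \bv v_b)/(n_a + n_b)$ could in principle collapse to the same normalized vector. The fix is to choose values with enough informative coordinates to break both symmetries simultaneously, for instance $\bv v_a = (1, 0, 1)^\top$ and $\bv v_b = (0, 1, 1)^\top$, so distinct mixture coefficients produce vectors differing in direction rather than merely scale or mean, after which $\bv W^h$ and $\bv W^c$ can be chosen to read off the proportion $n_a/(n_a+n_b)$. Beyond this normalization check, the argument is essentially bookkeeping, mirroring the treatment of saturated attention in \citet{merrill-2019-sequential}.
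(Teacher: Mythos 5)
Your proof is correct and follows essentially the same route as the paper's: the upper bound counts reachable configurations via the finitely many possible key--value pairs (equivalently, the symbol-count vector), giving $n^{O(1)}$ configurations and hence $O(\log n)$ bits, and the lower bound uses a uniform-attention construction whose minimal state must track counts. The paper's only difference is that it obtains the lower bound by reusing the construction from \autoref{thm:transformer-rr} (which is the same uniform-attention counter you build from scratch); your explicit attention to layer normalization possibly collapsing distinct averages is a legitimate detail that the paper's version glosses over.
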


\begin{proof}
The construction from \autoref{thm:transformer-rr} can reach a linear (in sequence length) number of different outputs, implying a linear number of different configurations,
and so that the space complexity of saturated self attention is $\Omega(\log n)$. We now show the upper bound $O(\log n)$.

A sufficient representation for the internal state (configuration) of a self-attention layer is the unordered group of key-value pairs over the prefixes of the input sequence.

Since $f_k:x_t\mapsto \bv k_t$ and $f_v: x_t\mapsto \bv v_t$ have finite domain ($\Sigma$), their images $K=\mathrm{image}(f_k),V=\mathrm{image}(f_v)$ are finite.\footnote{Note that any periodic positional encoding will also have finite image.}
Thus, there is also a finite number of possible key-value pairs $\langle \bv k_t, \bv v_t \rangle \in K \times V$. 
Recall that the internal configuration can be specified by the number of occurrences of each possible key-value pair.
Taking $n$ as an upper bound for each of these counts, we bound the number of configurations of the layer as $n^{\abs{K \times V}}$. Therefore the bit complexity is
\begin{equation}
    \log_2 \big( n^{\abs{K \times V}} \big) = O(\log n) .
\end{equation}
\end{proof}

Note that this construction does not apply if the ``vocabulary" we are attending over is not finite. Thus, using unbounded positional embeddings, stacking multiple self attention layers, or applying attention over other encodings with unbounded state might reach $\Theta(n)$.

While it eludes our current focus, we hope future work will extend the saturated analysis to self attention more completely. We direct the reader to \citet{hahn2020theoretical} for some additional related work.
\section{Memory Networks} \label{sec:stack-rnns}

All of the standard RNN architectures considered in \autoref{Sec:Hierarchy} have $O(\log n)$ space in their saturated form. In this section, we consider a stack RNN encoder similar to the one proposed by \citet{suzgun2019memory} and show how it, like a WFA, can encode binary representations from strings. Thus, the stack RNN has $\Theta(n)$ space. Additionally, we find that it is not RR. This places it in the upper-right box of \autoref{fig:hierarchy1}.

Classically, a stack is a dynamic list of objects to which elements $v \in V$ can be added and removed in a LIFO manner (using \emph{push} and \emph{pop} operations). 
The stack RNN proposed in \citet{suzgun2019memory} maintains a differentiable variant of such a stack, as follows:

\paragraph{Differentiable Stack}
In a differentiable stack, the update operation takes an element $s_t$ to push and a distribution $\pi_t$ over the update operations push, pop, and no-op, and returns the weighted average of the result of applying each to the current stack. 
The averaging is done elementwise along the stacks, beginning from the top entry.
To facilitate this, differentiable stacks are padded with infinite `null entries'. Their elements must also have a weighted average operation defined.



\begin{definition}[Geometric $k$-stack RNN encoder]
Initialize the stack $\bv S$ to an infinite list of null entries, and denote by $S_t$ the stack value at time $t$. Using $1$-indexing for the stack and denoting $\bv [S_{t-1}]_0 \triangleq \bv s_t$, the geometric $k$-stack RNN recurrent update is:\footnote{Intuitively, $[\pi_t]_a$ corresponds to the operations push, no-op, and pop, for the values $a=1,2,3$ respectively.}
\begin{align*}
    \bv s_t &= \bv f_s(x_t, \bv c_{t-1}) \\
    \pi_t  &= \bv f_\pi(x_t, \bv c_{t-1})\\
    \forall i \geq 1 \quad [\bv S_t]_i &= \sum_{a=1}^3 [\pi_t]_a [\bv S_{t-1}]_{i + a - 2}.
\end{align*}
\end{definition}

In this work we will consider the case where the null entries are $\bv 0$ and the encoding $\bv c_t$ is produced as a geometric-weighted sum of the stack contents, $$ \bv c_t = \sum_{i=1}^\infty \big( \frac{1}{2} \big)^{i-1} [\bv S_t]_i. $$ This encoding gives preference to the latest values in the stack, giving initial stack encoding $\bv c_0=\bv 0$.

\paragraph{Space Complexity} The memory introduced by the stack data structure pushes the encoder into $\Theta(n)$ space. We formalize this by showing that, like a WFA, the stack RNN can encode binary strings to their value.

\begin{lemma}
    The saturated stack RNN can compute the converging binary encoding function, i.e., $101 \mapsto 1 \cdot 1 + 0.5 \cdot 0 + 0.25 \cdot 1 = 1.25$.
\end{lemma}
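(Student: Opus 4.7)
The plan is to construct a saturated geometric $1$-stack RNN that unconditionally pushes the current input bit at every time step. Combined with the built-in geometric read-out $\bv c_t = \sum_{i \geq 1} (1/2)^{i-1}[\bv S_t]_i$, an always-push policy turns the recurrence into $\bv c_t = b_t + \tfrac{1}{2}\bv c_{t-1}$, whose closed form is exactly a converging binary encoding that sends $101$ to $1.25$.

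\textbf{Key steps.} First, I embed $x_t \in \{0,1\}$ as its numerical value $b_t$ and set $\bv f_s(x_t, \bv c_{t-1}) = b_t$, a trivial linear map that ignores $\bv c_{t-1}$. Second, I design $\bv f_\pi$ so that in the saturation limit $N\to\infty$ the three-way softmax collapses to the one-hot distribution $(1,0,0)$ selecting \emph{push}, independent of its inputs; as in \citet{merrill-2019-sequential}, this is achieved by giving the push logit a strictly positive constant bias and letting the overall parameter scaling diverge. Third, starting from $\bv S_0 = \bv 0$, the resulting recurrence is $[\bv S_t]_1 = b_t$ and $[\bv S_t]_{i+1} = [\bv S_{t-1}]_i$ for $i \geq 1$, and a short induction on $t$ yields $[\bv S_t]_i = b_{t-i+1}$ for $1 \leq i \leq t$ and $[\bv S_t]_i = 0$ otherwise.

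\textbf{Encoding.} Substituting the explicit stack contents into the geometric read-out gives
\begin{equation*}
    \bv c_t = \sum_{i=1}^{t} \left(\tfrac{1}{2}\right)^{i-1} b_{t-i+1} = \sum_{j=1}^{t} \left(\tfrac{1}{2}\right)^{t-j} b_j,
\end{equation*}
so that for $b_1 b_2 b_3 = 101$ one obtains $\tfrac{1}{4}\cdot 1 + \tfrac{1}{2}\cdot 0 + 1 \cdot 1 = 1.25$, matching the example when the terms are read in the order $(b_3, b_2, b_1)$ corresponding to top-down traversal of the stack.

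\textbf{Expected obstacle.} The only nontrivial point is justifying that the saturation limit and the infinite geometric read-out can be interchanged. Since all stack entries lie in $\{0,1\}$ and the weights $(1/2)^{i-1}$ are summable, the read-out converges absolutely and uniformly in $N$, so the limit passes inside the sum; likewise, standard softmax saturation arguments give uniform convergence of $\pi_t^{(N)}$ to the hard push distribution on the finite input domain. Beyond these two routine limiting arguments, the rest of the proof is bookkeeping.
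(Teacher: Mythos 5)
Your construction is correct and is essentially the paper's own proof: choose $k=1$, fix the controller to always push the current input bit, and read off the built-in geometric sum. You are in fact slightly more careful than the paper, whose displayed formula $\sum_{i=1}^{t}(1/2)^{i-1}x_i$ weights the \emph{first} bit most heavily, whereas the stack semantics (as you correctly derive) weights the most recently pushed bit most heavily --- the two readings coincide on the palindrome $101$ used in the statement.
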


\begin{proof}
Choose $k=1$. Fix the controller to always push $x_t$. Then, the encoding at time $t$ will be
\begin{equation}
    \bv c_t = \sum_{i=1}^t \big( \frac{1}{2} \big)^{i-1} x_i .
\end{equation}
This is the value of the prefix $x_{:t}$ in binary.
\end{proof}

\paragraph{Rational Recurrence} We provide another construction to show that the stack RNN can compute non-rational series. Thus, it is not RR.

\begin{definition}[Geometric counting]
    Define $f_2 : \{a, b\}^* \rightarrow \mathbb{N}$ such that
    \begin{equation*}
        f_2(x) = \exp_{\frac{1}{2}} \big( \num{a-b}{x} \big) - 1 .
    \end{equation*}
\end{definition}

\noindent Like similar functions we analyzed in \autoref{Sec:Hierarchy}, the Hankel matrix $H_{f_2}$ has infinite rank over the sub-block $a^ib^j$.

\begin{lemma}
    The saturated stack RNN can compute $f_2$.
\end{lemma}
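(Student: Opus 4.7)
The plan is to exhibit a $1$-stack saturated stack RNN whose controller always pushes (never pops, never no-ops), but chooses the value to push as a function of $x_t$ and $c_{t-1}$. Concretely, I would set
\begin{equation*}
    s_t = \bv f_s(x_t, c_{t-1}) = \begin{cases} -\tfrac{1}{2} & \text{if } x_t = a, \\ \tfrac{3}{2}\, c_{t-1} + 1 & \text{if } x_t = b, \end{cases} \qquad \pi_t = (1,0,0),
\end{equation*}
so that the push rule $c_t = s_t + c_{t-1}/2$ specializes to $c_t = -\tfrac{1}{2} + c_{t-1}/2$ on $a$ and $c_t = 2 c_{t-1} + 1$ on $b$.

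The core claim to prove by induction on $|x|$ is that $c_{|x|} = (\tfrac{1}{2})^{n} - 1$, where $n = \#_{a-b}(x)$. The base case $c_0 = 0 = (\tfrac{1}{2})^0 - 1$ is immediate from the $\bv 0$ initialization of the stack. For the inductive step on input $a$, assuming $c_{t-1} = (\tfrac{1}{2})^{n_{t-1}} - 1$, one computes
\begin{equation*}
    c_t = -\tfrac{1}{2} + \tfrac{1}{2}\left((\tfrac{1}{2})^{n_{t-1}} - 1\right) = (\tfrac{1}{2})^{n_{t-1}+1} - 1 = (\tfrac{1}{2})^{n_t} - 1,
\end{equation*}
and on input $b$,
\begin{equation*}
    c_t = 2\left((\tfrac{1}{2})^{n_{t-1}} - 1\right) + 1 = (\tfrac{1}{2})^{n_{t-1}-1} - 1 = (\tfrac{1}{2})^{n_t} - 1 ,
\end{equation*}
completing the induction and yielding $c_{|x|} = f_2(x)$.

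The reason I expect no serious obstacle is that the construction sidesteps the main awkwardness of the stack interface, namely that pop depends on the (non-directly-accessible) top entry: by always pushing, all updates are determined by $c_{t-1}$ alone through the push recurrence. The only thing worth checking carefully is that $s_t$ is within the admissible form for $\bv f_s$; since $\bv f_s$ is a learned function of $(x_t, c_{t-1})$ and $s_t$ is an affine function of $c_{t-1}$ whose coefficients depend only on $x_t$, this is easily realizable (e.g., as a two-layer saturated network over a one-hot embedding of $x_t$ concatenated with $c_{t-1}$). If desired, I would also note the side observation that, after the construction, subtracting the constant $1$ from $c_t + 1$ would instead yield $g(x) = (\tfrac{1}{2})^{\#_{a-b}(x)}$ directly, reinforcing the interpretation of $f_2$ as a ``geometric counter.''
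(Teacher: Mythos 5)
Your arithmetic checks out: under a push the encoding obeys $c_t = s_t + \tfrac12 c_{t-1}$, and with your two update rules the induction does give $c_{|x|} = (\tfrac12)^{\#_{a-b}(x)} - 1 = f_2(x)$ exactly (indeed, more exactly than the paper's own one-line construction, which is loose about constants). The problem is precisely the step you flag as ``worth checking carefully'' and then wave away: realizability of $\bv f_s$. On input $b$ you need the controller to emit $s_t = \tfrac32 c_{t-1} + 1$, while on input $a$ it must emit the constant $-\tfrac12$; that is an affine function of the \emph{continuous} quantity $c_{t-1}$ whose slope depends on the token. A saturated controller cannot produce this: saturation turns every squashing nonlinearity into a step function, so the hidden units of your proposed two-layer network take finitely many values and its output is a piecewise-constant function of $c_{t-1}$, not one with slope $\tfrac32$. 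A single affine layer with no activation also fails, since there the coefficient of $c_{t-1}$ is a fixed parameter and cannot switch between $0$ (on $a$) and $\tfrac32$ (on $b$); you would need a multiplicative gate between $x_t$ and $\bv c_{t-1}$, which the stack-RNN controller as defined is not shown to have. Since the whole point of the section is that the \emph{stack}, not the controller, supplies the extra power, smuggling an input-modulated linear map of the encoding into $\bv f_s$ begs the question.

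The paper's proof sidesteps this entirely by using the stack operations themselves: the controller pushes the constant $1$ on $a$ and selects pop on $b$ --- both functions of $x_t$ alone, trivially realizable. Pushing prepends to the geometric sum ($c_t = 1 + \tfrac12 c_{t-1}$), and popping shifts the stack down, which doubles the geometric encoding of what remains; the ``multiply by $2$'' you manufacture inside the controller thus comes for free from the data structure. To salvage your always-push route you would have to either restrict $\bv f_s$ to token-dependent constants (which forces you back to using pop for the $b$-step) or explicitly justify that the controller class admits input-gated affine maps of $\bv c_{t-1}$ --- a strictly stronger model than the one the paper analyzes.
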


\begin{proof}
    Choose $k=1$. Fix the controller to push $1$ for $x_t=a$, and pop otherwise.
\end{proof}
\begin{figure}
    \centering
    \includegraphics[width=.48\textwidth]{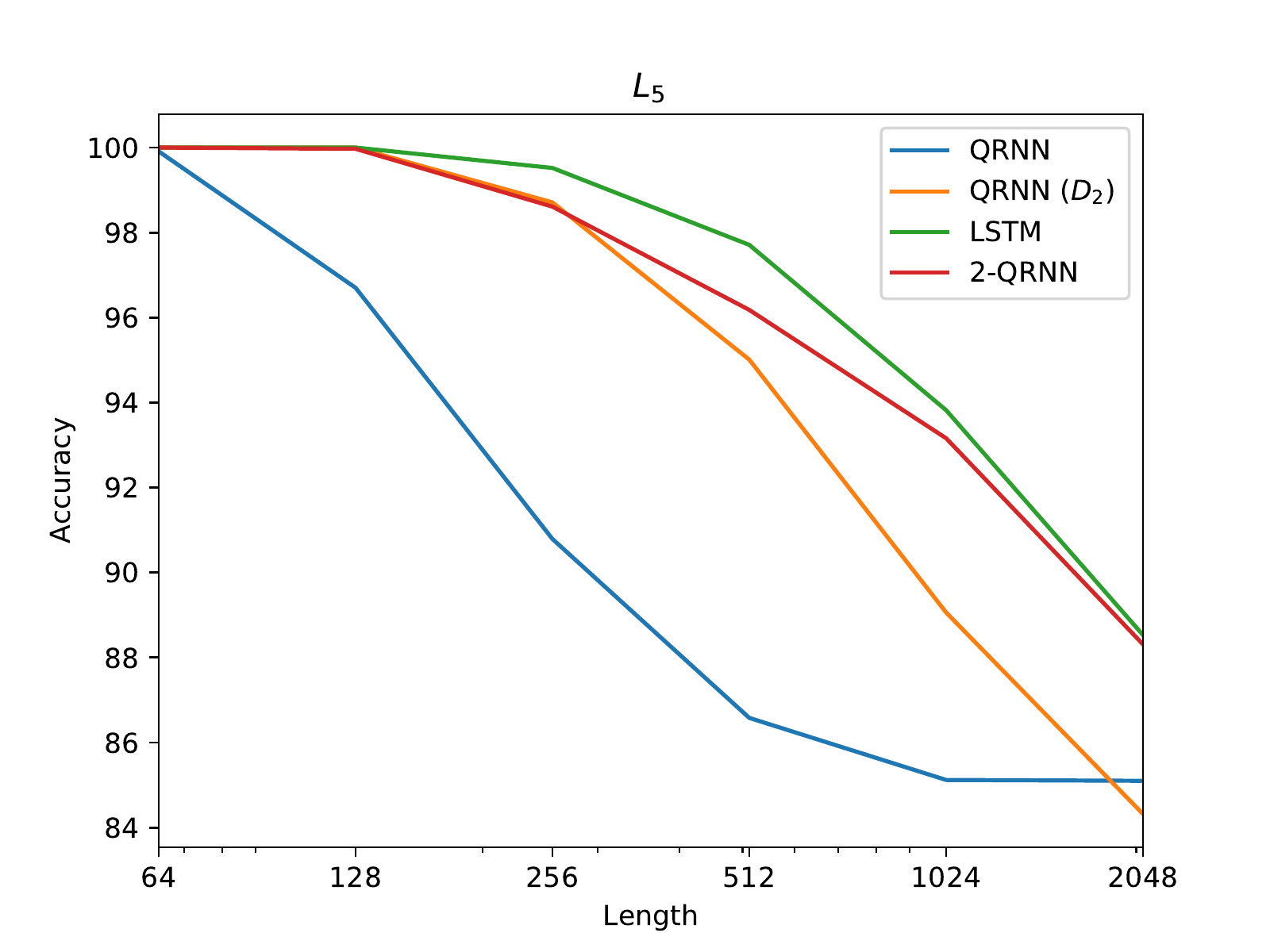}
    \includegraphics[width=.48\textwidth]{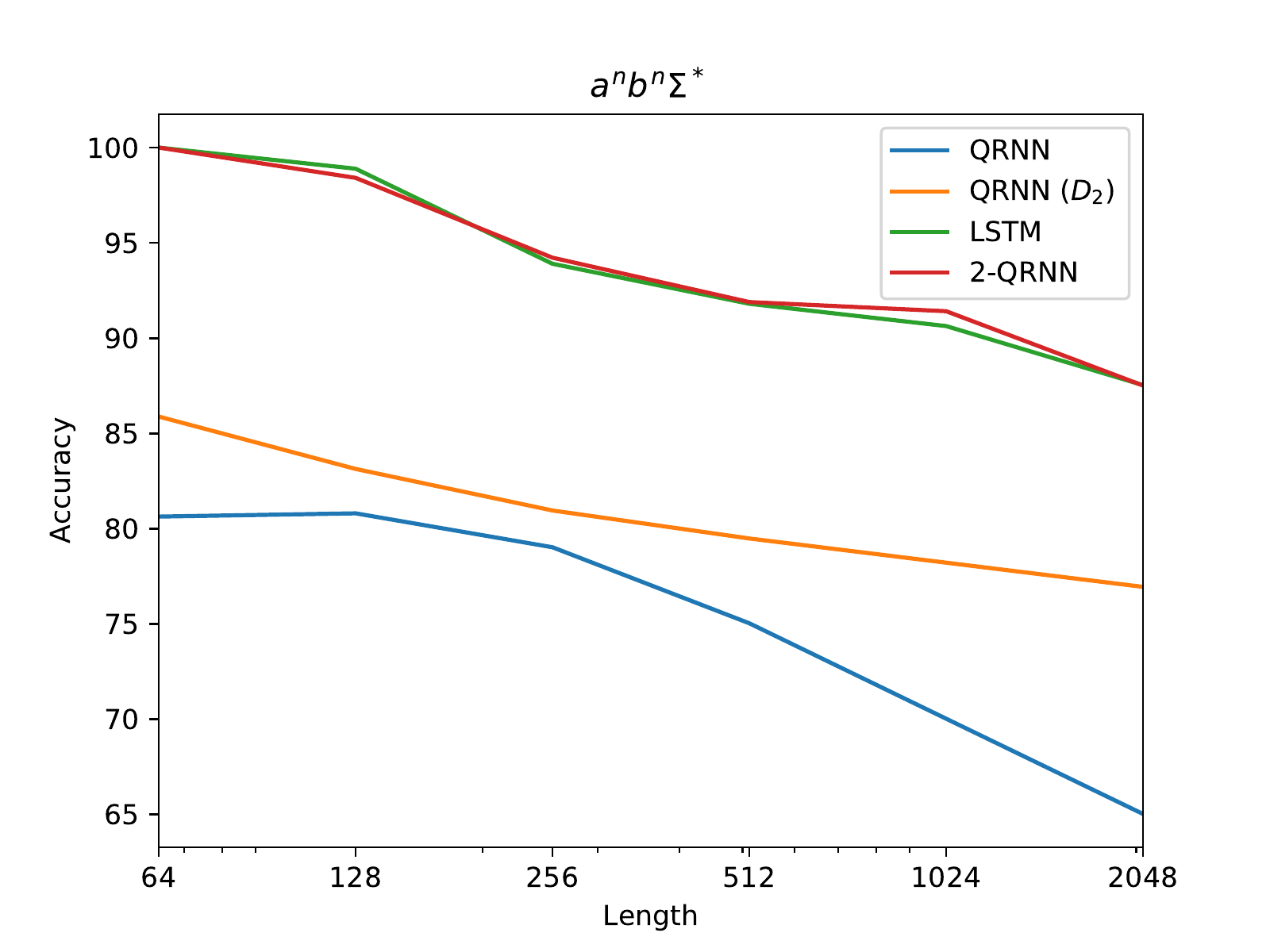}
    \caption{Updated results for $L_5$ (\textbf{top}) and $a^nb^n\Sigma^*$ (\textbf{bottom}). All networks use a $D_1$ decoder, except for ``QRNN ($D_2$)''. 2-QRNN is a 2-layer QRNN.}
    \label{fig:corrected}
\end{figure}

\section{Erratum} \label{sec:erratum}

We present corrections for experimental results originally reported in \autoref{Sec:Experiments}. Thanks to David Chiang for helping to identify these mistakes. The QRNN used in our theoretical analysis was the \textit{ifo}-QRNN, whereas QRNN used for the original experimental results was the \textit{fo}-QRNN (by our definition, a QRNN where $\mathbf{i} = 1 - \mathbf{f}$). We redo our experiments as originally intended with the \textit{ifo}-QRNN instead of the weaker \textit{fo}-QRNN.

Results are presented in \autoref{fig:corrected}. Overall, the trend is similar to what was originally reported. For $L_5$, all four models achieve 100\% accuracy at the training length of $64$. However, the QRNN performance drops earlier than for the other networks. This matches the theoretical result that the s-QRNN cannot recognize $L_5$, whereas the other three saturated networks can. For $a^nb^n\Sigma^*$, the LSTM and $2$-layer QRNN reach similar accuracy at all lengths. On the other hand, the $1$-layer QRNN, with either a $1$ or $2$-layer decoder, performs worse. This is predicted by the fact that the s-QRNN cannot recognize $a^nb^n\Sigma^*$ for any decoder.

While the results are mostly similar to the original results, one difference is that the \textit{ifo}-QRNN reaches 100\% accuracy on $L_5$ whereas the original QRNN did not reach 100\% even at $n=64$. We consider the \textit{generalization} accuracy for $n > 64$ to be a better indicator of whether the network has learned the language rather than the in-distribution test accuracy on strings of length $64$. This is because, if we evaluate at the same length, a finite-state model can still in principle do well since it is unlikely that the test set will contain prefixes with configurations unseen during training.

We formalize this for $L_5$, defined as
\begin{equation}
    L_5 = \big\{ x \in (a|b)^* \; \mid \; \abs{\num{a-b}{x}} < 5 \big\} .
\end{equation}

Define the \textit{configuration} $c(x)$ of a string $x \in \{a,b\}^*$ as $\num{a}{x} - \num{b}{x}$. Intuitively, $c(x)$ represents all the information needed solve the recognition task. As a function of string length $n$, $c(x)$ follows a random walk where the motion of each discrete time step is $1$ with probability $1/2$ and $-1$ otherwise. Thus, $c(x)$ is a random variable with a binomial distribution with mean $0$ and variance $n/4$. So, roughly 95\% of strings with length 64 will have $\abs{c(x)} \leq \sqrt{64} = 8$. Only by increasing the length $n$ can we force the model to contend with new configurations.


\end{document}